\newtheorem{lemma}{Lemma}
\newcommand{\bb}[1]{{\mathbb{#1}}}
\newcommand{\norm}[1]{{\lVert {#1} \rVert}}
\newcommand{\diff}{\mathrm{d}}
\def\Figref#1{Figure~\ref{#1}}
\def\eqref#1{Eq.~(\ref{#1})}
\def\Eqref#1{Equation~(\ref{#1})}
\def\floor#1{\lfloor #1 \rfloor}
\def\1{\bm{1}}
\def\vzero{{\bm{0}}}
\def\vone{{\bm{1}}}
\def\vx{{\bm{x}}}
\def\mI{{\bm{I}}}
\DeclareMathAlphabet{\mathsfit}{\encodingdefault}{\sfdefault}{m}{sl}
\SetMathAlphabet{\mathsfit}{bold}{\encodingdefault}{\sfdefault}{bx}{n}
\def\gN{{\mathcal{N}}}
\def\gQ{{\mathcal{Q}}}
\def\gX{{\mathcal{X}}}
\newcommand{\R}{\mathbb{R}}
\newcommand{\KL}{D_{\mathrm{KL}}}
\DeclareMathOperator*{\argmin}{arg\,min}
\title{Denoising Diffusion Implicit Models}
\author{Jiaming Song, Chenlin Meng \& Stefano Ermon\\
Stanford University\\
\texttt{\{tsong,chenlin,ermon\}@cs.stanford.edu}}
\begin{document}

\maketitle

\begin{abstract}
Denoising diffusion probabilistic models (DDPMs) have achieved high quality image generation without adversarial training, yet they require simulating a Markov chain for many steps in order to produce a sample. To accelerate sampling, we present denoising diffusion implicit models (DDIMs), a more efficient class of iterative implicit probabilistic models with the same training procedure as DDPMs. In DDPMs, the generative process is defined as the reverse of a particular Markovian diffusion process. We generalize DDPMs via a class of non-Markovian diffusion processes that lead to the same training objective. These non-Markovian processes can correspond to generative processes that are deterministic, giving rise to implicit models that produce high quality samples much faster. We empirically demonstrate that DDIMs can produce high quality samples $10 \times$ to $50 \times$ faster in terms of wall-clock time compared to DDPMs, allow us to trade off computation for sample quality, perform semantically meaningful image interpolation directly in the latent space, and reconstruct observations with very low error. 
\end{abstract}

\section{Introduction}
Deep generative models have demonstrated the ability to produce high quality samples in many domains~\citep{karras2020analyzing,oord2016wavenet}. In terms of image generation, generative adversarial networks (GANs, \citet{goodfellow2014generative}) currently exhibits higher sample quality than likelihood-based methods such as variational autoencoders~\citep{kingma2013auto}, autoregressive models~\citep{oord2016pixel} and normalizing flows~\citep{rezende2015variational,dinh2016density}. However, GANs require very specific choices in optimization and architectures in order to stabilize training~\citep{arjovsky2017wasserstein,gulrajani2017improved,karras2018a,brock2018large}, and could fail to cover modes of the data distribution~\citep{zhao2018bias}.

Recent works on iterative generative models~\citep{bengio2014deep}, such as denoising diffusion probabilistic models (DDPM, \citet{ho2020denoising}) and noise conditional score networks (NCSN, \citet{song2019generative}) %
have demonstrated the ability to produce samples comparable to that of GANs, without having to perform adversarial training. To achieve this, many denoising autoencoding models are trained to denoise samples corrupted by various levels of Gaussian noise. Samples are then produced by a Markov chain which, starting from white noise, progressively denoises it into an image. This generative Markov Chain process is either based on Langevin dynamics~\citep{song2019generative} or obtained by reversing a forward \textit{diffusion process} that progressively turns an image into noise~\citep{sohl-dickstein2015deep}. 

A critical drawback of these models is that they require many iterations to produce a high quality sample. For DDPMs, this is because that the generative process (from noise to data) approximates the reverse of the forward \textit{diffusion process} (from data to noise), which could have thousands of steps; iterating over all the steps is required to produce a single sample, which is much slower compared to GANs, which only needs one pass through a network. 
For example, it takes around 20 hours to sample 50k images of size $32 \times 32$ from a DDPM, but less than a minute to do so from \href{https://github.com/ajbrock/BigGAN-PyTorch}{\underline{a GAN}} on a Nvidia 2080 Ti GPU. This becomes more problematic for larger images as sampling 50k images of size $256 \times 256$ could take nearly $1000$ hours on the same GPU.

To close this efficiency gap between DDPMs and GANs, %
we present denoising diffusion implicit models (DDIMs). DDIMs are implicit probabilistic models~\citep{mohamed2016learning} and are closely related to DDPMs, in the sense that they are trained with the same objective function. %
In Section~\ref{sec:variational}, 
we generalize the forward \textit{diffusion process} used by DDPMs, which is Markovian, to \textit{non-Markovian} ones, for which we are still able to design suitable reverse generative Markov chains.
We show that the resulting variational training objectives have a shared surrogate objective, which is \textit{exactly} the objective used to train DDPM. %
Therefore, we can freely choose from a large family of generative models using the same neural network simply by choosing a different, \textit{non-Markovian} diffusion process (Section \ref{sec:reverse-family}) and the corresponding reverse generative Markov Chain. 
In particular, we are able to use \textit{non-Markovian} diffusion processes which lead to "short" generative Markov chains (Section \ref{sec:acceleration}) that can be simulated in a small number of steps.
This can massively increase sample efficiency only at a minor cost in sample quality.

In Section~\ref{sec:experiments}, we demonstrate several empirical benefits of DDIMs over DDPMs. \textit{First}, DDIMs have superior sample generation quality compared to DDPMs, when we accelerate sampling by $10\times$ to $100\times$ using our proposed method. 
\textit{Second}, DDIM samples have the following ``consistency'' property, which does not hold for DDPMs: if we start with the same initial latent variable and generate several samples with Markov chains of various lengths, these samples would have similar high-level features. %
\textit{Third}, because of ``consistency'' in DDIMs, %
we can perform semantically meaningful image interpolation by manipulating the initial latent variable in DDIMs, unlike DDPMs which interpolates near the image space due to the stochastic generative process.

\section{Background}
\label{sec:background}
\begin{figure}
    \centering
    \begin{subfigure}{0.45\textwidth}
    \includegraphics[width=\textwidth]{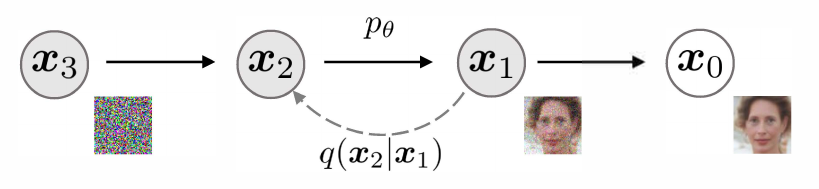}
    \end{subfigure}
    ~
    \begin{subfigure}{0.45\textwidth}
    \includegraphics[width=\textwidth]{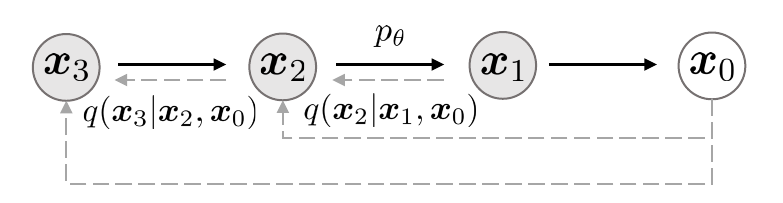}
    \end{subfigure}
    \caption{Graphical models for diffusion (left) and non-Markovian (right) inference models.}
    \label{fig:diffusion}
\end{figure}
Given samples from a data distribution $q(\vx_0)$, we are interested in learning a model distribution $p_\theta(\vx_0)$ that approximates $q(\vx_0)$ and is easy to sample from. 
Denoising diffusion probabilistic models (DDPMs,~\citet{sohl-dickstein2015deep,ho2020denoising}) are latent variable models of the form
\begin{align}
    p_\theta(\vx_0) = \int p_\theta(\vx_{0:T}) \diff \vx_{1:T}, \quad \text{where} \quad  p_\theta(\vx_{0:T}) := p_\theta(\vx_T) \prod_{t=1}^{T} p^{(t)}_\theta(\vx_{t-1} | \vx_t) \label{eq:gen}
\end{align}
where $\vx_1, \ldots, \vx_T$ are latent variables in the same sample space as $\vx_0$ (denoted as $\gX$). The parameters $\theta$ are learned to fit the data distribution $q(\vx_0)$ by maximizing a variational lower bound: %
\begin{align}
    \max_\theta \bb{E}_{q(\vx_0)}[\log p_\theta(\vx_0)] \leq \max_\theta \bb{E}_{q(\vx_0, \vx_1, \ldots, \vx_T)}\left[\log p_\theta(\vx_{0:T}) - \log q(\vx_{1:T} | \vx_0) \right] \label{eq:elbo}
\end{align}
where $q(\vx_{1:T} | \vx_0)$ is some inference distribution over the latent variables.
Unlike typical latent variable models (such as the variational autoencoder~\citep{rezende2014stochastic}), DDPMs are learned with a fixed (rather than trainable) inference procedure $q(\vx_{1:T} | \vx_0)$, 
and latent variables are relatively high dimensional. 
For example, \citet{ho2020denoising} considered the following Markov chain with Gaussian transitions 
parameterized by a decreasing sequence
$\alpha_{1:T} \in (0, 1]^T$:
\begin{align}
    q(\vx_{1:T} | \vx_0) := \prod_{t=1}^{T} q(\vx_t | \vx_{t-1}), \text{where} \ q(\vx_t | \vx_{t-1}) := \gN\left(\sqrt{\frac{\alpha_t}{\alpha_{t-1}}} \vx_{t-1}, \left(1 - \frac{\alpha_t}{\alpha_{t-1}}\right) \mI\right) \label{eq:diff-ho}
\end{align}
where the covariance matrix is ensured to have positive terms on its diagonal.
This is called the \textit{forward process} due to the autoregressive nature of the sampling procedure (from $\vx_0$ to $\vx_T$). We call the latent variable model $p_\theta(\vx_{0:T})$, which is a Markov chain that samples from $\vx_T$ to $\vx_0$, the \textit{generative process}, since it approximates the intractable \textit{reverse process} $q(\vx_{t-1} | \vx_t)$. Intuitively, the forward process progressively adds noise to the observation $\vx_0$, whereas the generative process progressively denoises a noisy observation (\Figref{fig:diffusion}, left).

A special property of the forward process is that 
$$
q(\vx_t | \vx_0) := \int q(\vx_{1:t} | \vx_0) \diff \vx_{1:(t-1)}= \gN(\vx_t; \sqrt{\alpha_t} \vx_0, (1 - \alpha_t) \mI);
$$ 
so we can express $\vx_t$ as a linear combination of $\vx_0$ and a noise variable $\epsilon$:
\begin{align}
    \vx_t = \sqrt{\alpha_t} \vx_0 + \sqrt{1 - \alpha_t} \epsilon, \quad \text{where} \quad \epsilon \sim \gN(\vzero, \mI). \label{eq:reparam_xt}
\end{align}
When we set $\alpha_{T}$ sufficiently close to $0$, $q(\vx_T | \vx_0)$ converges to a standard Gaussian for all $\vx_0$, so it is natural to set $p_\theta(\vx_T) := \gN(\vzero, \mI)$.
If all the conditionals %
are modeled as Gaussians with trainable mean functions and fixed variances, the objective in \eqref{eq:elbo} can be simplified to\footnote{Please refer to Appendix~\ref{app:ddpm} for details.}:
\begin{align}
    L_\gamma(\epsilon_\theta) & := \sum_{t=1}^{T}\gamma_t\bb{E}_{\vx_0 \sim q(\vx_0), \epsilon_t \sim \gN(\vzero, \mI)}\left[  \norm{\epsilon_{\theta}^{(t)}(\sqrt{\alpha_t} \vx_0 + \sqrt{1 - \alpha_t} \epsilon_t) - \epsilon_t}_2^2 \right] \label{eq:l-gamma}
\end{align}
where $\epsilon_\theta := \{\epsilon_\theta^{(t)}\}_{t=1}^{T}$ is a set of $T$ functions, each $\epsilon_{\theta}^{(t)}: \gX \to \gX$ (indexed by $t$) is a function with trainable parameters $\theta^{(t)}$, and $\gamma := [\gamma_1, \ldots, \gamma_T]$ is a vector of positive coefficients in the objective that depends on $\alpha_{1:T}$. 
In \citet{ho2020denoising}, the objective with $\gamma = \vone$ is optimized instead to maximize generation performance of the trained model; this is also the same objective used in noise conditional score networks~\citep{song2019generative} based on score matching~\citep{h2005estimation,vincent2011connection}. %
From a trained model, $\vx_0$ is sampled by first sampling $\vx_T$ from the prior $p_\theta(\vx_T)$, and then sampling $\vx_{t-1}$ from the generative processes iteratively. %

The length $T$ of the forward process is an important hyperparameter in DDPMs.
From a variational perspective, a large $T$ allows the reverse process to be close to a Gaussian~\citep{sohl-dickstein2015deep}, so that the generative process modeled with Gaussian conditional distributions becomes a good approximation; this motivates the choice of large $T$ values, such as $T = 1000$ in \citet{ho2020denoising}. %
However, as all $T$ iterations have to be performed sequentially, instead of in parallel, to obtain a sample $\vx_0$, sampling from DDPMs is much slower than sampling from other deep generative models, which makes them impractical for tasks where compute is limited and latency is critical. %

\section{Variational Inference for non-Markovian Forward Processes}
\label{sec:variational}

Because the generative model approximates the reverse of the inference process, we need to rethink the inference process in order to reduce the number of iterations required by the generative model.
Our key observation is that the DDPM objective in the form of $L_\gamma$ only depends on the marginals\footnote{We slightly abuse this term (as well as joints) when only conditioned on $\vx_0$.} $q(\vx_t | \vx_0)$, but not directly on the joint $q(\vx_{1:T} | \vx_{0})$. Since there are many inference distributions (joints) with the same marginals, we 
explore 
alternative
inference processes that are non-Markovian, which leads
to new generative processes (\Figref{fig:diffusion}, right). 
These non-Markovian inference process lead to the same surrogate objective function as DDPM, as we will show below.
In Appendix~\ref{app:discrete}, we show that the non-Markovian perspective also applies beyond the Gaussian case.

\subsection{Non-Markovian forward processes} 
Let us consider a family $\gQ$ of inference distributions, %
indexed by a real vector $\sigma \in \R_{\geq 0}^{T}$:
\begin{gather}
    q_\sigma(\vx_{1:T} | \vx_0) := q_\sigma(\vx_T | \vx_0) \prod_{t=2}^{T} q_\sigma(\vx_{t-1} | \vx_{t}, \vx_0) \label{eq:diff-new}
\end{gather}
where $q_\sigma(\vx_{T} | \vx_0) = \gN(\sqrt{\alpha_T} \vx_0, (1 - \alpha_T) \mI)$ and for all $t > 1$,
\begin{gather} 
   q_\sigma(\vx_{t-1} | \vx_t, \vx_0) = \gN\left(\sqrt{\alpha_{t-1}} \vx_{0} + \sqrt{1 - \alpha_{t-1} - \sigma^2_t} \cdot {\frac{\vx_{t}  - \sqrt{\alpha_{t}} \vx_0}{\sqrt{1 - \alpha_{t}}}}, \sigma_t^2 \mI \right). \label{eq:reversed-close-form}
\end{gather}
The mean function is chosen to order to ensure that $q_\sigma(\vx_{t} | \vx_0) = \gN(\sqrt{\alpha_t} \vx_0, (1 - \alpha_t) \mI)$ for all $t$ 
(see Lemma~\ref{lemma:reverse-process-consistency} of Appendix~\ref{app:proofs}), so that it defines a joint inference distribution that matches the ``marginals'' as desired. %
The forward process\footnote{We overload the term ``forward process'' for cases where the inference model is not a diffusion.} can be derived from Bayes' rule:
\begin{align}
    q_\sigma(\vx_{t} | \vx_{t-1}, \vx_0) = \frac{q_\sigma(\vx_{t-1} | \vx_{t}, \vx_0) q_\sigma(\vx_{t} | \vx_0)}{q_\sigma(\vx_{t-1} | \vx_0)}, \label{eq:bayes-rule}
\end{align}
which is %
also Gaussian (although we do not use this fact for the remainder of this paper).
Unlike the diffusion process in \eqref{eq:diff-ho}, %
the forward process here is no longer Markovian, since each $\vx_t$ could depend on both $\vx_{t-1}$ and $\vx_0$. The magnitude of $\sigma$ controls the how stochastic the forward process is; when $\sigma \to \vzero$, %
we reach an extreme case where as long as we observe $\vx_0$ and $\vx_t$ for some $t$, then $\vx_{t-1}$ become known and fixed. %

\subsection{Generative process and unified variational inference objective}

Next, we define a trainable generative process $p_\theta(\vx_{0:T})$ where each $p_{\theta}^{(t)}(\vx_{t-1} | \vx_t)$ %
 leverages knowledge of $q_\sigma(\vx_{t-1} | \vx_{t}, \vx_0)$. %
Intuitively, given a noisy observation $\vx_t$, we first make a prediction\footnote{Learning a distribution over the predictions is also possible, but empirically we found little benefits of it.} of the corresponding $\vx_0$, 
and then use it to obtain a sample $\vx_{t-1}$ through the reverse conditional distribution $q_\sigma(\vx_{t-1} | \vx_{t}, \vx_0)$, which we have defined. %

For some $\vx_0 \sim q(\vx_0)$ and $\epsilon_t \sim \gN(\vzero, \mI)$, 
$\vx_t$ can be obtained using \eqref{eq:reparam_xt}. The model $\epsilon_\theta^{(t)}(\vx_t)$ then attempts to predict $\epsilon_t$ from $\vx_t$, without knowledge of $\vx_0$.
By rewriting \eqref{eq:reparam_xt}, one can then predict the \textit{denoised observation}, which is a prediction of $\vx_0$ given $\vx_t$:
\begin{align}
    f_\theta^{(t)}(\vx_t) := (\vx_t - \sqrt{1 - \alpha_t} \cdot \epsilon_{\theta}^{(t)}(\vx_t)) / \sqrt{\alpha_t}. \label{eq:x0-pred-def}
\end{align}
We can then define the generative process with a fixed prior $p_\theta(\vx_T) = \gN(\vzero, \mI)$ and
\begin{align}
    p_\theta^{(t)}(\vx_{t-1} | \vx_t) = \begin{cases}
    \gN(f_\theta^{(1)}(\vx_1), \sigma_1^2 \mI)  & \text{if} \ t = 1 \\
    q_\sigma(\vx_{t-1} | \vx_t, f_{\theta}^{(t)}(\vx_t)) & \text{otherwise,}
    \end{cases} \label{eq:new-reverse}
\end{align}
where $q_\sigma(\vx_{t-1} | \vx_t, f_{\theta}^{(t)}(\vx_t))$ is defined as in \eqref{eq:reversed-close-form} with $\vx_0$ replaced by $f_{\theta}^{(t)}(\vx_t)$. %
We add some Gaussian noise (with covariance $\sigma_1^2 \mI$) for the case of $t = 1$ to ensure that the generative process is supported everywhere. 

We optimize $\theta$ via the following variational inference objective (which is a functional over $\epsilon_\theta$):
\begin{align}
   & J_\sigma(\epsilon_\theta) :=
   \bb{E}_{\vx_{0:T} \sim q_\sigma(\vx_{0:T})}[\log q_\sigma(\vx_{1:T} | \vx_0) - \log p_\theta(\vx_{0:T})] \\
   = & \ \bb{E}_{\vx_{0:T} \sim q_\sigma(\vx_{0:T})} \left[\log q_\sigma(\vx_T | \vx_0) + \sum_{t=2}^{T} \log q_\sigma(\vx_{t-1} | \vx_t, \vx_0) - \sum_{t=1}^{T} \log p_\theta^{(t)}(\vx_{t-1} | \vx_t) - \log p_\theta(\vx_T) \right] \nonumber%
\end{align}
where we factorize $q_\sigma(\vx_{1:T} | \vx_0)$ according to \eqref{eq:diff-new} and $p_\theta(\vx_{0:T})$ according to \eqref{eq:gen}.

From the definition of $J_\sigma$, it would appear that a different model has to be trained for every choice of $\sigma$, since it corresponds to a different variational objective (and a different generative process). %
However, $J_\sigma$ is equivalent to $L_\gamma$ for certain weights $\gamma$, as we show below.
\begin{restatable}{theorem}{unifiedklobj}
\label{thm:unifiedklobj}
For all $\sigma > \vzero$, there exists $\gamma \in \R_{> 0}^{T}$ and $C \in \R$, such that $J_\sigma = L_\gamma + C$.
\end{restatable}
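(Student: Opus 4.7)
The plan is to expand $J_\sigma$ using the prescribed factorizations of $q_\sigma$ and $p_\theta$, so that each per-timestep contribution becomes the log-ratio of two Gaussians that differ only in their means. The overall expression splits as
\begin{align*}
J_\sigma(\epsilon_\theta) = \E_{q_\sigma}\Big[\log q_\sigma(\vx_T | \vx_0) - \log p_\theta(\vx_T)\Big] + \sum_{t=2}^{T} \E_{q_\sigma}\!\left[\log \tfrac{q_\sigma(\vx_{t-1}|\vx_t,\vx_0)}{p_\theta^{(t)}(\vx_{t-1}|\vx_t)}\right] - \E_{q_\sigma}[\log p_\theta^{(1)}(\vx_0|\vx_1)].
\end{align*}
The first bracket is independent of $\theta$ (both $q_\sigma(\vx_T|\vx_0)$ and $p_\theta(\vx_T)=\gN(\vzero,\mI)$ are fixed), so it contributes to $C$. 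The remaining two groups must be shown to equal $\sum_t \gamma_t\, \E[\norm{\epsilon_\theta^{(t)}(\sqrt{\alpha_t}\vx_0+\sqrt{1-\alpha_t}\epsilon_t)-\epsilon_t}_2^2]$ modulo constants.

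For $t \geq 2$, by construction $p_\theta^{(t)}(\vx_{t-1}|\vx_t) = q_\sigma(\vx_{t-1}|\vx_t, f_\theta^{(t)}(\vx_t))$ shares the covariance $\sigma_t^2\mI$ with $q_\sigma(\vx_{t-1}|\vx_t,\vx_0)$ and differs only in the mean, with $\vx_0$ replaced by $f_\theta^{(t)}(\vx_t)$. Taking the expectation over $\vx_{t-1}\mid \vx_t,\vx_0$ therefore yields a Gaussian KL that reduces to the squared $\ell_2$ norm of the mean difference divided by $2\sigma_t^2$. A direct computation from \eqref{eq:reversed-close-form} gives that mean difference as $\kappa_t \, (\vx_0 - f_\theta^{(t)}(\vx_t))$, where $\kappa_t := \sqrt{\alpha_{t-1}} - \sqrt{1-\alpha_{t-1}-\sigma_t^2}\cdot \tfrac{\sqrt{\alpha_t}}{\sqrt{1-\alpha_t}}$. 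Next I substitute $\vx_t = \sqrt{\alpha_t}\vx_0+\sqrt{1-\alpha_t}\epsilon_t$ into the definition of $f_\theta^{(t)}$ in \eqref{eq:x0-pred-def} to get $\vx_0 - f_\theta^{(t)}(\vx_t) = \sqrt{(1-\alpha_t)/\alpha_t}\,(\epsilon_\theta^{(t)}(\vx_t)-\epsilon_t)$. Taking the outer expectation over $(\vx_0,\epsilon_t)\sim q(\vx_0)\gN(\vzero,\mI)$, the $t$-th term becomes exactly $\gamma_t\, \E[\norm{\epsilon_\theta^{(t)}(\sqrt{\alpha_t}\vx_0+\sqrt{1-\alpha_t}\epsilon_t)-\epsilon_t}_2^2]$ with $\gamma_t = \kappa_t^2(1-\alpha_t)/(2\sigma_t^2\alpha_t)$.

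For $t=1$, the term $-\log p_\theta^{(1)}(\vx_0|\vx_1) = -\log\gN(\vx_0; f_\theta^{(1)}(\vx_1),\sigma_1^2\mI)$ separates into a Gaussian normalizer (absorbed into $C$) and $\tfrac{1}{2\sigma_1^2}\norm{\vx_0-f_\theta^{(1)}(\vx_1)}_2^2$, which by the same $\vx_0$--$f_\theta$ identity gives $\gamma_1\,\E[\norm{\epsilon_\theta^{(1)}(\vx_1)-\epsilon_1}_2^2]$ with $\gamma_1 = (1-\alpha_1)/(2\sigma_1^2\alpha_1) > 0$. Collecting the three groups yields $J_\sigma = L_\gamma + C$ for some $\gamma\in\R_{\geq 0}^T$ and $C\in\R$.

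The one delicate step is confirming \emph{strict} positivity of every $\gamma_t$, which is what the theorem demands. For $t=1$ this is immediate from $\sigma_1,\alpha_1 \in (0,1)$. For $t\geq 2$ it reduces to $\kappa_t \neq 0$; setting $\kappa_t = 0$ and squaring gives $\alpha_{t-1}(1-\alpha_t) = \alpha_t(1-\alpha_{t-1}-\sigma_t^2)$, i.e.\ $\sigma_t^2 = 1 - \alpha_{t-1}/\alpha_t$, which is \emph{negative} because $\alpha_{1:T}$ is decreasing and hence $\alpha_{t-1}>\alpha_t$. This is incompatible with $\sigma_t>0$, so $\kappa_t^2 > 0$ and $\gamma_t > 0$. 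I expect the main bookkeeping obstacle to be just the mean-difference algebra and the clean simplification $\vx_0 - f_\theta^{(t)}(\vx_t) \propto \epsilon_\theta^{(t)}(\vx_t)-\epsilon_t$; once this identity is in hand, the rest is a Gaussian-KL computation and one sign check.
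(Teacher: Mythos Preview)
Your proof is correct and follows the same decomposition as the paper's: split off the $\theta$-independent prior term, reduce the $t\geq 2$ contributions to Gaussian KLs between distributions with identical covariance $\sigma_t^2\mI$, handle $t=1$ via the Gaussian log-likelihood, and then rewrite $\vx_0-f_\theta^{(t)}(\vx_t)$ in terms of $\epsilon_\theta^{(t)}(\vx_t)-\epsilon_t$. You are in fact more careful than the paper on two points: you correctly track the coefficient $\kappa_t$ multiplying $(\vx_0-f_\theta^{(t)}(\vx_t))$ in the mean difference (the paper's derivation drops it and also has a stray factor in its stated $\gamma_t$), and you explicitly verify strict positivity of each $\gamma_t$ via the $\kappa_t\neq 0$ argument, which the paper omits.
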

The variational objective $L_\gamma$ is special in the sense that if parameters $\theta$ %
of the models $\epsilon_\theta^{(t)}$ %
are not shared across different $t$, then the optimal solution for $\epsilon_\theta$ will not depend on the weights $\gamma$ (as global optimum is achieved by separately maximizing each term in the sum). 
This property of $L_\gamma$ has two implications. %
On the one hand, this justified the use of $L_\vone$ as a surrogate objective function for the variational lower bound in DDPMs; on the other hand, since $J_\sigma$ is equivalent to some $L_\gamma$ from Theorem~\ref{thm:unifiedklobj}, the optimal solution of $J_\sigma$ is also the same as that of $L_\vone$. %
Therefore, if parameters are not shared across $t$ in the model $\epsilon_\theta$, then the $L_\vone$ objective used by \citet{ho2020denoising} can be used as a surrogate objective for the variational objective $J_\sigma$ as well.

\section{Sampling from Generalized Generative Processes}\label{sec:sampling}
With $L_\vone$ as the objective, we are not only learning a generative process for the Markovian inference process considered in \citet{sohl-dickstein2015deep} and \citet{ho2020denoising}, but also generative processes for many non-Markovian forward processes parametrized by $\sigma$ %
that we have described. Therefore, we can essentially use pretrained DDPM models as the solutions to the new objectives, and focus on finding a generative process that is better at producing samples subject to our needs by changing $\sigma$. %
\subsection{Denoising Diffusion Implicit Models}
\label{sec:reverse-family}
From $p_\theta(\vx_{1:T})$ in \eqref{eq:new-reverse}, one can generate a sample $\vx_{t-1}$ from a sample $\vx_{t}$ via:
\begin{align}
    \vx_{t-1} & = \sqrt{\alpha_{t-1}} \underbrace{\left(\frac{\vx_t - \sqrt{1 - \alpha_t} \epsilon_\theta^{(t)}(\vx_t)}{\sqrt{\alpha_t}}\right)}_{\text{`` predicted } \vx_0 \text{''}} + \underbrace{\sqrt{1 - \alpha_{t-1} - \sigma_t^2} \cdot \epsilon_\theta^{(t)}(\vx_t)}_{\text{``direction pointing to } \vx_t \text{''}} + \underbrace{\sigma_t \epsilon_t}_{\text{random noise}} \label{eq:sample-eq-gen}
\end{align}
where $\epsilon_t \sim \gN(\vzero, \mI)$ is standard Gaussian noise independent of $\vx_t$, and we define $\alpha_0 := 1$. Different choices of $\sigma$ values results in different generative processes, all while using the same model $\epsilon_\theta$, so re-training the model is unnecessary. When $\sigma_t = \sqrt{(1 - \alpha_{t-1}) / (1 - \alpha_t)} \sqrt{1 - \alpha_t / \alpha_{t-1}}$ for all $t$, the forward process becomes Markovian, and the generative process becomes a DDPM.

We note another special case when $\sigma_t = 0$ for all $t$\footnote{Although this case is not covered in Theorem~\ref{thm:unifiedklobj}, we can always approximate it by making $\sigma_t$ very small.}; the forward process becomes deterministic given $\vx_{t-1}$ and $\vx_0$, except for $t = 1$; in the generative process, the coefficient before the random noise $\epsilon_t$ becomes zero. %
    The resulting model becomes an implicit probabilistic model~\citep{mohamed2016learning}, where samples are generated from latent variables with a fixed procedure (from $\vx_T$ to $\vx_0$). We name this the \textit{denoising diffusion implicit model} (DDIM, pronounced \textipa{/d:Im/}), because it is an implicit probabilistic model trained with the DDPM objective (despite the forward process no longer being a diffusion).

\subsection{Accelerated generation processes}
\label{sec:acceleration}
In the previous sections, the generative process is considered as the approximation to the reverse process; since of the forward process has $T$ steps, the generative process is also forced to sample $T$ steps. However, as the denoising objective $L_\vone$ does not depend on the specific forward procedure as long as $q_\sigma(\vx_{t} | \vx_0)$ is fixed, we may also consider forward processes with lengths smaller than $T$, which accelerates the corresponding generative processes without having to train a different model.

\begin{figure}
\centering
\includegraphics[width=0.5\textwidth]{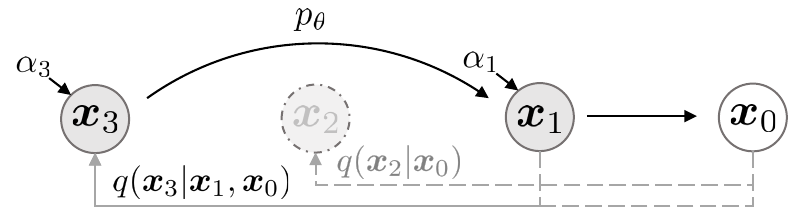}
\captionof{figure}{Graphical model for accelerated generation, where $\tau = [1, 3]$.}
\label{fig:diffusion-figure-acc}
\end{figure}

Let us consider the forward process as defined not on all the latent variables $\vx_{1:T}$, but on a subset $\{\vx_{\tau_1}, \ldots, \vx_{\tau_S}\}$, where $\tau$ is an increasing sub-sequence of $[1, \ldots, T]$ of length $S$. 
In particular, we define the sequential forward process over $\vx_{\tau_1}, \ldots, \vx_{\tau_S}$ such that $q(\vx_{\tau_i} | \vx_0) = \gN(\sqrt{\alpha_{\tau_i}} \vx_0, (1 - \alpha_{\tau_i}) \mI)$ matches the ``marginals'' (see \Figref{fig:diffusion-figure-acc} for an illustration).
The generative process now samples latent variables according to $\text{reversed}(\tau)$, which we term \textit{(sampling) trajectory}. When the length of the sampling trajectory is much smaller than $T$, we may achieve
significant increases in computational efficiency due to the iterative nature of the sampling process. 

Using a similar argument as in Section~\ref{sec:variational}, we can justify using the model trained with the $L_\vone$ objective, so no changes are needed in training. We show that only slight changes to the updates in \eqref{eq:sample-eq-gen} are needed to obtain the new, faster generative processes, which applies to DDPM, DDIM, as well as all generative processes considered in \eqref{eq:new-reverse}.
We include these details in Appendix~\ref{app:acceleration}.

In principle, this means that we can train a model with an arbitrary number of forward steps but only sample from some of them in the generative process. Therefore, the trained model could consider many more steps than what is considered in \citep{ho2020denoising} or even a continuous time variable $t$ \citep{chen2020wavegrad}. We leave empirical investigations of this aspect as future work.

\subsection{Relevance to Neural ODEs}
Moreover, we can rewrite the DDIM iterate according to \eqref{eq:sample-eq-gen}, and its similarity to Euler integration for solving ordinary differential equations (ODEs) becomes more apparent:
\begin{align}
    \frac{\vx_{t-\Delta t}}{\sqrt{\alpha_{t-\Delta t}}}  = \frac{\vx_t}{\sqrt{\alpha_t}}  + \left(\sqrt{\frac{1 - \alpha_{t-\Delta t}}{\alpha_{t-\Delta t}}} - \sqrt{\frac{1 - \alpha_{t}}{\alpha_t}}\right) \epsilon_\theta^{(t)}(\vx_t) \label{eq:ddim-euler}
\end{align}
To derive the corresponding ODE, we can reparameterize $(\sqrt{1 - \alpha} / \sqrt{\alpha})$ with $\sigma$ and $(\vx / \sqrt{\alpha})$ with $\bar{\vx}$. In the continuous case, $\sigma$ and $\vx$ are functions of $t$, where $\sigma: \bb{R}_{\geq 0} \to \bb{R}_{\geq 0}$ is continous, increasing with $\sigma(0) = 0$.
\Eqref{eq:ddim-euler} with can be treated as a Euler method over the following ODE:
\begin{align}
   \diff \bar{\vx}(t) = \epsilon_\theta^{(t)}\left(\frac{\bar{\vx}(t)}{\sqrt{\sigma^2 + 1}}\right) \diff \sigma(t) \label{eq:ddim-ode},
\end{align}
where the initial conditions is $\vx(T) \sim \gN(0, \sigma(T))$ for a very large $\sigma(T)$ (which corresponds to the case of $\alpha \approx 0$). This suggests that with enough discretization steps, the we can also reverse the generation process (going from $t = 0$ to $T$), which encodes $\vx_0$ to $\vx_T$ and simulates the reverse of the ODE in \eqref{eq:ddim-ode}. This suggests that unlike DDPM, we can use DDIM to obtain encodings of the observations (as the form of $\vx_T$), which might be useful for other downstream applications that requires latent representations of a model.

In a concurrent work, \citep{song2020score} proposed a ``probability flow ODE'' that aims to recover the marginal densities of a stochastic differential equation (SDE) based on scores, from which a similar sampling schedule can be obtained. Here, we state that the our ODE is equivalent to a special case of theirs (which corresponds to a continuous-time analog of DDPM).
\begin{restatable}{proposition}{equivalence}
The ODE in \eqref{eq:ddim-ode} with the optimal model $\epsilon_\theta^{(t)}$ has an equivalent probability flow ODE corresponding to the ``Variance-Exploding'' SDE in \citet{song2020score}.
\end{restatable}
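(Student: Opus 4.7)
The plan is to convert the DDIM ODE into a score-based ODE via Tweedie's formula, change variables from $\vx$ to $\bar{\vx}$ to track how the score transforms, and then match the resulting expression against the probability flow ODE derived from the Variance-Exploding SDE $d\vx = \sqrt{d[\sigma^2(t)]/dt}\,d\vw$, whose probability flow ODE (per Song et al.) has the form $d\vx = -\tfrac{1}{2}\frac{d[\sigma^2(t)]}{dt}\nabla_\vx \log p_t(\vx)\,dt$.

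First, I would note the algebraic identity $\sqrt{\sigma^2 + 1} = 1/\sqrt{\alpha}$, so that the argument $\bar{\vx}/\sqrt{\sigma^2+1}$ appearing in \eqref{eq:ddim-ode} is just the original iterate $\vx = \sqrt{\alpha}\,\bar{\vx}$. Next, I would invoke the fact that the optimal noise predictor satisfies $\epsilon_\theta^{(t)}(\vx) = \E[\epsilon \mid \vx_t = \vx]$ under the $L_\vone$ objective, and apply Tweedie's formula to the Gaussian forward marginal $q(\vx_t \mid \vx_0) = \gN(\sqrt{\alpha_t}\vx_0, (1-\alpha_t)\mI)$ to obtain
\begin{align*}
\epsilon_\theta^{(t)}(\vx) = -\sqrt{1 - \alpha_t}\,\nabla_\vx \log q_t(\vx),
\end{align*}
where $q_t$ denotes the marginal of $\vx_t$.

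Then I would handle the change of variables. Writing $p_t$ for the density of $\bar{\vx} = \vx/\sqrt{\alpha_t}$, a standard Jacobian computation gives $p_t(\bar{\vx}) = \alpha_t^{d/2}\, q_t(\sqrt{\alpha_t}\,\bar{\vx})$, and therefore $\nabla_\vx \log q_t(\vx) = \tfrac{1}{\sqrt{\alpha_t}}\nabla_{\bar{\vx}}\log p_t(\bar{\vx})$. Substituting this and the Tweedie identity into \eqref{eq:ddim-ode}, and using $\sqrt{1-\alpha_t}/\sqrt{\alpha_t} = \sigma$, yields
\begin{align*}
d\bar{\vx}(t) = -\sigma(t)\,\nabla_{\bar{\vx}}\log p_t(\bar{\vx})\,d\sigma(t) = -\tfrac{1}{2}\,\nabla_{\bar{\vx}}\log p_t(\bar{\vx})\,d[\sigma^2(t)],
\end{align*}
using $\sigma\,d\sigma = \tfrac{1}{2}\,d(\sigma^2)$. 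This is exactly the form of the probability flow ODE associated with the VE SDE $d\bar{\vx} = \sqrt{d[\sigma^2(t)]/dt}\,d\vw$, which concludes the argument.

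The main obstacle is the careful bookkeeping of the change of variables: one must be clear that the score transformation factor $1/\sqrt{\alpha_t}$ combines with the $\sqrt{1-\alpha_t}$ from Tweedie to produce $\sigma(t)$, and that the factor of $\sqrt{\sigma^2+1}$ inside $\epsilon_\theta$ is exactly what undoes the $\sqrt{\alpha}$ rescaling so that the score is evaluated at the correct point. A secondary subtlety is justifying the use of Tweedie's formula: the model is trained with the $L_\vone$ objective, so one should briefly note that minimizing $\E\|\epsilon_\theta(\vx_t) - \epsilon\|^2$ over all functions yields the conditional expectation, which under the Gaussian forward kernel equals $-\sqrt{1-\alpha_t}\,\nabla \log q_t$; the remainder of the proof is then purely algebraic.
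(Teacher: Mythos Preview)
Your proposal is correct and follows essentially the same route as the paper. Both arguments hinge on the identity
\[
\nabla_{\bar{\vx}}\log p_t(\bar{\vx}) \;=\; -\,\frac{1}{\sigma(t)}\,\epsilon_\theta^{(t)}\!\left(\frac{\bar{\vx}}{\sqrt{\sigma^2(t)+1}}\right)
\]
for the optimal model, after which the DDIM ODE and the VE probability-flow ODE are seen to coincide by the elementary substitution $\sigma\,d\sigma = \tfrac12\,d(\sigma^2)$. The only presentational difference is how that identity is obtained: the paper argues that $\epsilon_\theta^{(t)}$ and $\nabla_{\bar{\vx}}\log p_t$ are the respective minimizers of two denoising score-matching objectives related by the reparametrization $\bar{\vx}=\vx/\sqrt{\alpha}$, whereas you derive it by applying Tweedie's formula in the $\vx$-coordinates and then pushing the score through the change of variables explicitly. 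These are equivalent derivations of the same relation; your version is arguably a bit more transparent about where the factor of $\sigma(t)$ comes from. One small point worth stating explicitly in your write-up is that the density $p_t$ of $\bar{\vx}_t$ is indeed the data distribution convolved with $\gN(0,\sigma^2(t)\mI)$ (since $\bar{\vx}_t=\vx_0+\sigma(t)\epsilon$), so that the score you land on really is the VE-SDE score and not merely some rescaled version of it.
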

We include the proof in Appendix~\ref{app:proofs}. While the ODEs are equivalent, the sampling procedures are not, since the Euler method for the probability flow ODE will make the following update:
\begin{align}
    \frac{\vx_{t-\Delta t}}{\sqrt{\alpha_{t-\Delta t}}}  = \frac{\vx_t}{\sqrt{\alpha_t}}  + \frac{1}{2}\left(\frac{1 - \alpha_{t-\Delta t}}{\alpha_{t-\Delta t}} - \frac{1 - \alpha_{t}}{\alpha_t}\right) \cdot \sqrt{\frac{\alpha_t}{1 - \alpha_t}} \cdot \epsilon_\theta^{(t)}(\vx_t) \label{eq:pf-euler}
\end{align}
which is equivalent to ours if $\alpha_t$ and $\alpha_{t-\Delta t}$ are close enough. In fewer sampling steps, however, these choices will make a difference; we take Euler steps with respect to $\diff \sigma(t)$ (which depends less directly on the scaling of ``time'' $t$) whereas \citet{song2020score} take Euler steps with respect to $\diff t$.

\section{Experiments}
\label{sec:experiments}

In this section, we show that DDIMs outperform DDPMs in terms of image generation when fewer iterations are considered, giving speed ups of $10\times$ to $100\times$ over the original DDPM generation process. %
Moreover, unlike DDPMs, once the initial latent variables $\vx_T$ are fixed, DDIMs retain high-level image features regardless of the generation trajectory, so they are able to perform interpolation directly from the latent space. DDIMs can also be used to encode samples that reconstruct them from the latent code, which DDPMs cannot do due to the stochastic sampling process.

For each dataset, we use the \textbf{same trained model} with $T = 1000$ and the objective being $L_\gamma$ from \eqref{eq:l-gamma} with $\gamma = \vone$; as we argued in Section~\ref{sec:variational}, no changes are needed with regards to the training procedure.
The only changes that we make is \textbf{how we produce samples from the model}; we achieve this by controlling $\tau$ (which controls how fast the samples are obtained) and $\sigma$ (which interpolates between the deterministic DDIM and the stochastic DDPM).

We consider different sub-sequences $\tau$ of $[1, \ldots, T]$ and different variance hyperparameters $\sigma$ indexed by elements of $\tau$. To simplify comparisons, we consider $\sigma$ with the form:
\begin{align}
    \sigma_{\tau_{i}}(\eta) = \eta \sqrt{(1 - \alpha_{\tau_{i-1}})/(1 - \alpha_{\tau_{i}})}\sqrt{1 - {\alpha_{\tau_i}}/{\alpha_{\tau_{i-1}}}}, %
\end{align}
where $\eta \in \R_{\geq 0}$ is a hyperparameter that we can directly control. This includes an original DDPM generative process when $\eta = 1$ and DDIM when $\eta = 0$. We also consider DDPM where the random noise has a larger standard deviation than $\sigma(1)$, which we denote as $\hat{\sigma}$:
$
    \hat{\sigma}_{\tau_{i}} = \sqrt{1 - {\alpha_{\tau_i}}/{\alpha_{\tau_{i-1}}}}%
$
. This is used by \href{https://github.com/hojonathanho/diffusion/blob/master/scripts/run_cifar.py#L136}{\underline{the implementation}} in \cite{ho2020denoising} \textbf{only to obtain the CIFAR10 samples}, but not samples of the other datasets. We include more details in Appendix~\ref{app:exp}. %

\subsection{Sample quality and efficiency}
In Table~\ref{tab:cifar10-fid}, we report the quality of the generated samples with models trained on CIFAR10 and CelebA, as measured by Frechet  Inception Distance (FID~\citep{heusel2017gans}), where we vary the number of timesteps used to generate a sample ($\dim(\tau)$) and the stochasticity of the process ($\eta$).
As expected, the sample quality becomes higher as we increase $\dim(\tau)$, presenting a trade-off between sample quality and computational costs.
We observe that DDIM ($\eta = 0$) achieves the best sample quality when $\dim(\tau)$ is small, and DDPM ($\eta = 1$ and $\hat{\sigma}$) typically has worse sample quality compared to its less stochastic counterparts with the same $\dim(\tau)$, except for the case for $\dim(\tau) = 1000$ and $\hat{\sigma}$ reported by \citet{ho2020denoising} where DDIM is marginally worse. However, the sample quality of $\hat{\sigma}$ becomes much worse for smaller $\dim(\tau)$, which suggests that it is ill-suited for shorter trajectories. DDIM, on the other hand, achieves high sample quality much more consistently. 

In \Figref{fig:cifar10-tvh}, we show CIFAR10 and CelebA samples with the same number of sampling steps and varying $\sigma$. For the DDPM, the sample quality deteriorates rapidly when the sampling trajectory has 10 steps. For the case of $\hat{\sigma}$, the generated images seem to have more noisy perturbations under short trajectories; this explains why the FID scores are much worse than other methods, as FID is very sensitive to such perturbations (as discussed in \citet{Jolicoeur-Martineau2020adversarial}). 

In \Figref{fig:time}, we show that the amount of time needed to produce a sample scales linearly with the length of the sample trajectory. %
This suggests that DDIM is useful for producing samples more efficiently, as samples can be generated in much fewer steps. Notably, DDIM is able to produce samples with quality comparable to 1000 step models within $20$ to $100$ steps, which is a $10\times$ to $50\times$ speed up compared to the original DDPM. Even though DDPM could also achieve reasonable sample quality with $100\times$ steps, DDIM requires much fewer steps to achieve this; on CelebA, the FID score of the 100 step DDPM is similar to that of the 20 step DDIM. 

\newcolumntype{G}{>{\centering}b{0.065\textwidth}}
\newcolumntype{C}{>{\centering\arraybackslash}b{0.065\textwidth}}
\definecolor{LightCyan}{rgb}{0.88,1,1}
\definecolor{LightOrange}{rgb}{1,0.85,0.70}
\definecolor{DarkCyan}{rgb}{0,0.8,0.8}
\definecolor{DarkOrange}{rgb}{1,0.50,0.0}
\begin{table}
    \centering
    \caption{CIFAR10 and CelebA image generation measured in FID. $\eta = 1.0$ and $\hat{\sigma}$ are cases of {\color{DarkOrange}DDPM} (although \citet{ho2020denoising} only considered $T=1000$ steps, and $S < T$ can be seen as simulating DDPMs trained with $S$ steps), and $\eta = 0.0$ indicates {\color{DarkCyan}DDIM}. %
    }
    \resizebox{\textwidth}{!}{
    \begin{tabular}{lc|GGGGG|GGGGC}
    \toprule
    &  & \multicolumn{5}{c|}{CIFAR10 ($32 \times 32$)} & \multicolumn{5}{c}{CelebA ($64 \times 64$)} \\
    \multicolumn{2}{c|}{$S$} & 10 & 20 & 50 & 100 & 1000 & 10 & 20 & 50 & 100 & 1000 \\\midrule
    \rowcolor{LightCyan}
    \multirow{4}{*}{$\eta$} & $0.0$ & \textbf{13.36} & \textbf{6.84} & \textbf{4.67} & \textbf{4.16} & 4.04 & \textbf{17.33} & \textbf{13.73} & \textbf{9.17} & \textbf{6.53} & 3.51 \\
    & 0.2 & 14.04 & 7.11 & 4.77 & 4.25 & 4.09 & 17.66 & 14.11 & 9.51 & 6.79 & 3.64 \\
    & 0.5 & 16.66 & 8.35 & 5.25 & 4.46 & 4.29 & 19.86 & 16.06 & 11.01 & 8.09 & 4.28 \\
    \rowcolor{LightOrange}
    & 1.0 & 41.07 & 18.36 & 8.01 & 5.78 & 4.73  & 33.12 & 26.03 & 18.48 & 13.93 & 5.98 \\\midrule
    \rowcolor{LightOrange}
    \multicolumn{2}{c|}{$\hat{\sigma}$} & 367.43 & 133.37 & 32.72 & 9.99 & \textbf{3.17} & 299.71 & 183.83 & 71.71 & 45.20 & \textbf{3.26} \\
        \bottomrule
    \end{tabular}
    }
    \label{tab:cifar10-fid}
\end{table}

\begin{figure}
    \centering
    \begin{subfigure}{0.23\textwidth}
    \includegraphics[width=\textwidth]{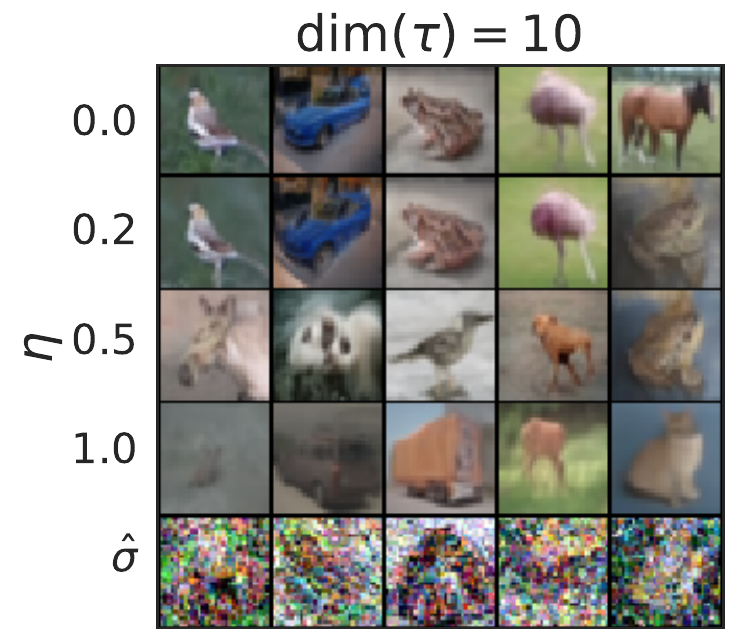}
    \end{subfigure}
    ~
    \begin{subfigure}{0.23\textwidth}
    \includegraphics[width=\textwidth]{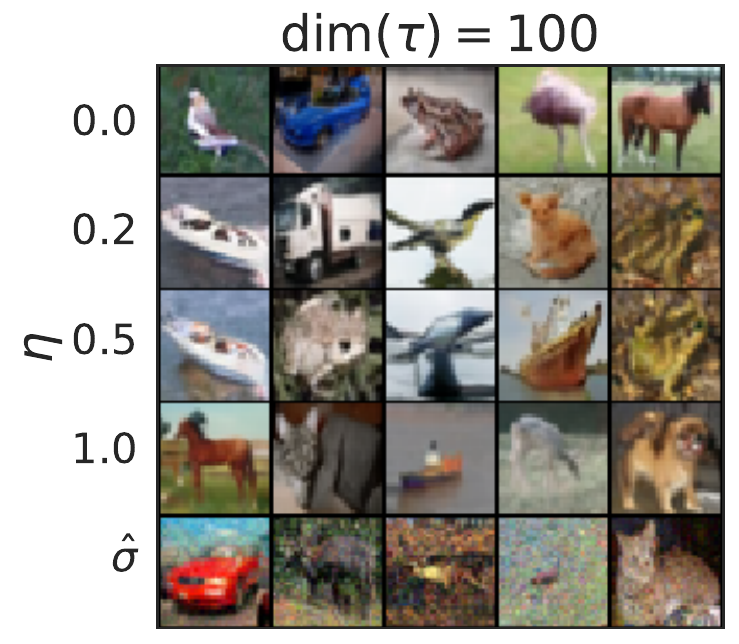}
    \end{subfigure}
    ~
        \begin{subfigure}{0.23\textwidth}
    \includegraphics[width=\textwidth]{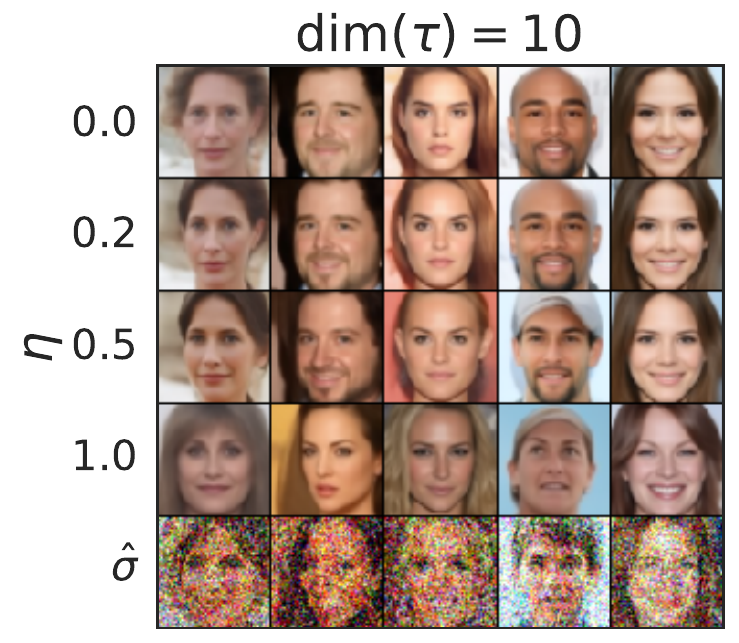}
    \end{subfigure}
    ~
    \begin{subfigure}{0.23\textwidth}
    \includegraphics[width=\textwidth]{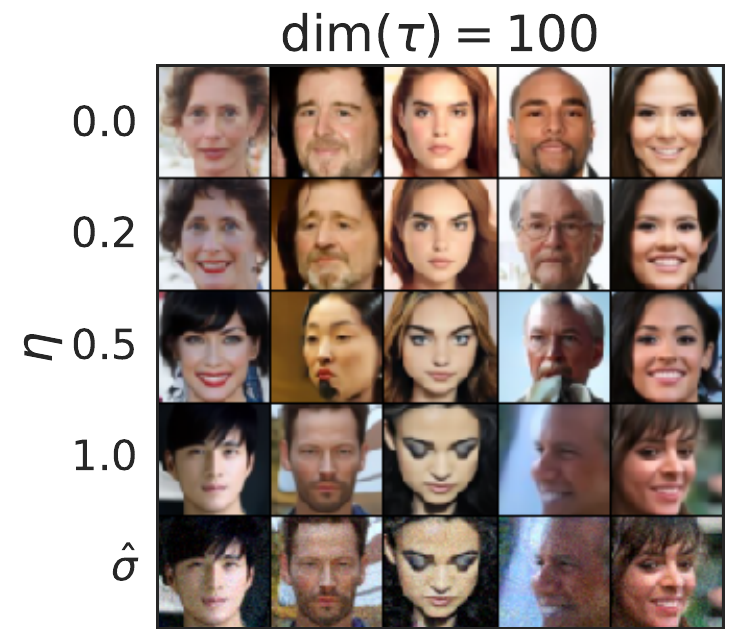}
    \end{subfigure}
    \caption{CIFAR10 and CelebA samples with $\dim(\tau) = 10$ and $\dim(\tau) = 100$.}
    \label{fig:cifar10-tvh}
\end{figure}

\begin{figure}
    \centering
    \begin{subfigure}{0.4\textwidth}
    \centering
        \includegraphics[width=\textwidth]{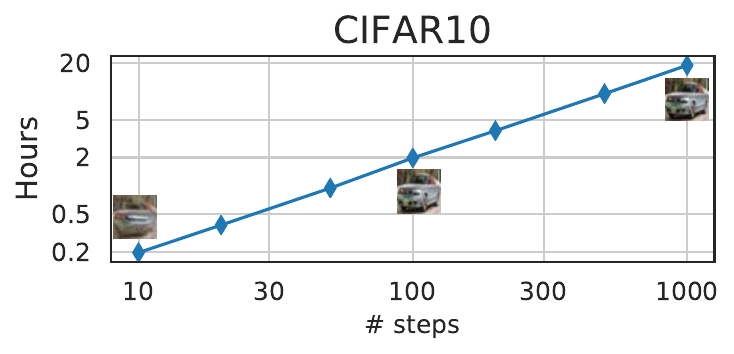}
    \end{subfigure}
    ~
     \begin{subfigure}{0.4\textwidth}
     \centering
        \includegraphics[width=\textwidth]{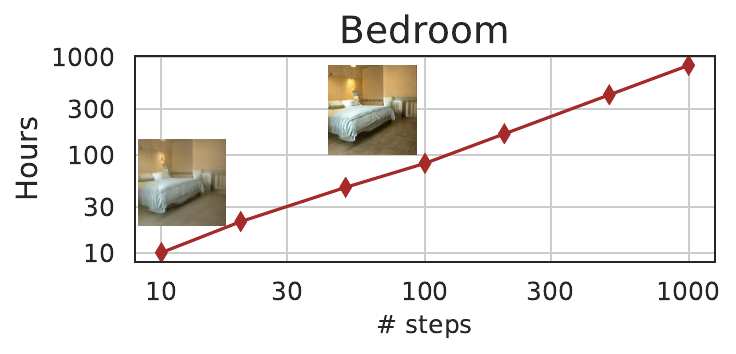}
    \end{subfigure}
    \caption{Hours to sample 50k images with one Nvidia 2080 Ti GPU and samples at different steps.}
    \label{fig:time}
\end{figure}

\subsection{Sample consistency in DDIMs} 
For DDIM, the generative process is deterministic, and $\vx_0$ would depend only on the initial state $\vx_T$. In \Figref{fig:consistency}, we observe the generated images under different generative trajectories (i.e. different $\tau$) while starting with the same initial $\vx_T$. Interestingly, for the generated images with the same initial $\vx_T$, most high-level features are similar, regardless of the generative trajectory. In many cases, samples generated with only 20 steps are already very similar to ones generated with 1000 steps in terms of high-level features, with only minor differences in details. 
Therefore, it would appear that $\vx_T$ alone would be an informative latent encoding of the image; and minor details that affects sample quality are encoded in the parameters, as longer sample trajectories gives better quality samples but do not significantly affect the high-level features.
We show more samples in Appendix~\ref{app:samples}. %

\begin{figure}
\begin{subfigure}{\textwidth}
    \centering
    \includegraphics[width=0.95\textwidth]{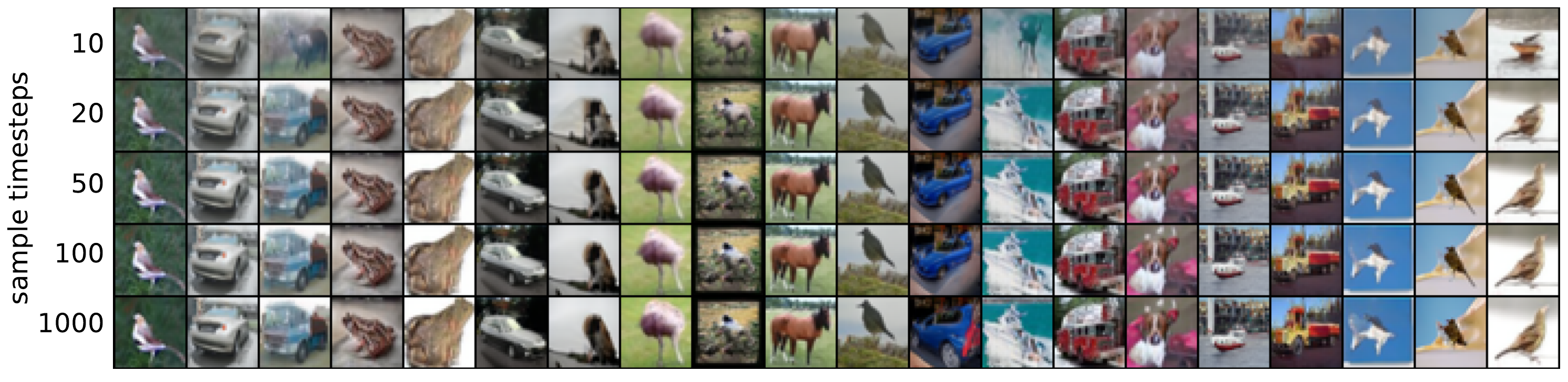}
\end{subfigure}

\begin{subfigure}{\textwidth}
    \centering
    \includegraphics[width=0.99\textwidth]{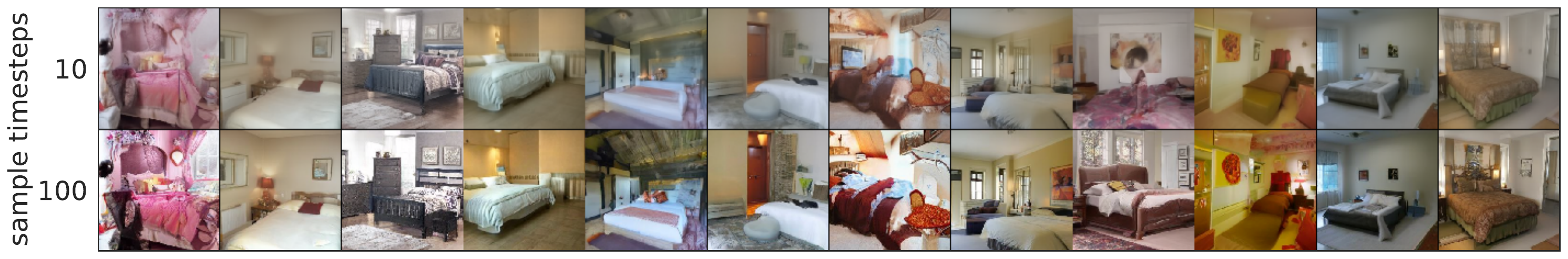}
\end{subfigure}

\begin{subfigure}{\textwidth}
    \centering
    \includegraphics[width=0.99\textwidth]{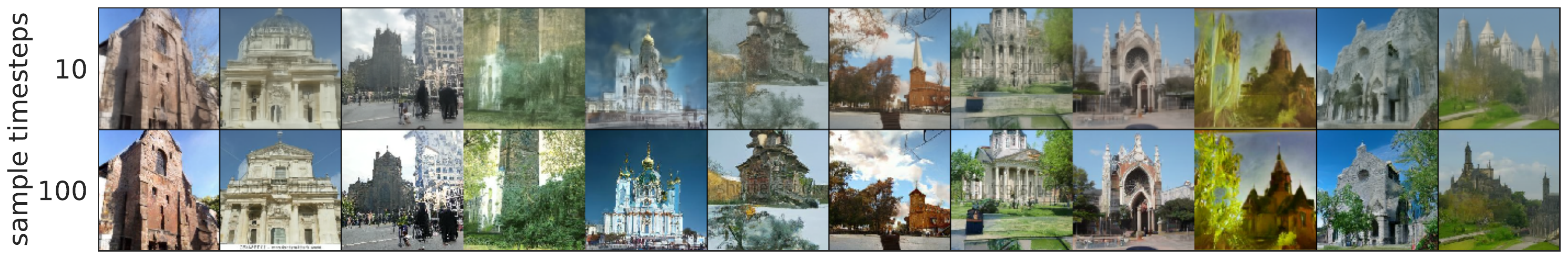}
\end{subfigure}
\caption{Samples from DDIM with the same random $\vx_T$ and different number of steps.}
\label{fig:consistency}
\end{figure}

\subsection{Interpolation in deterministic generative processes} 
\begin{figure}[h]
\begin{subfigure}{\textwidth}
    \centering
    \includegraphics[width=\textwidth]{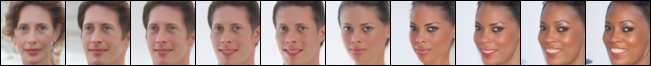}
\end{subfigure}
~
\begin{subfigure}{\textwidth}
    \centering
    \includegraphics[width=\textwidth]{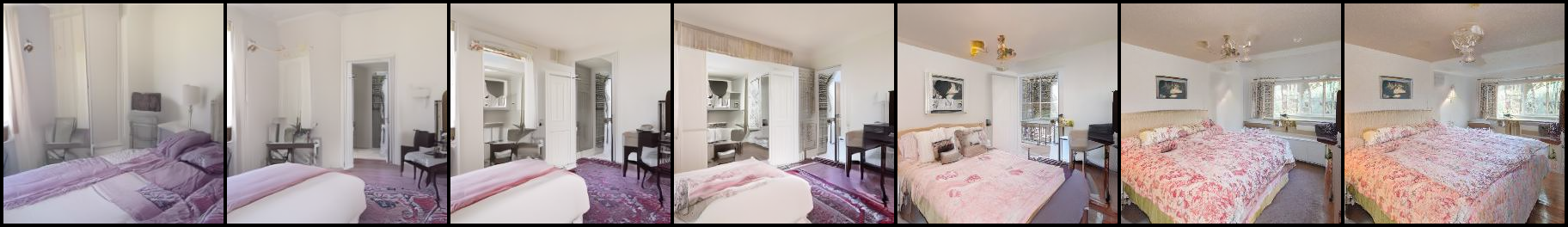}
\end{subfigure}
~
\begin{subfigure}{\textwidth}
    \centering
    \includegraphics[width=\textwidth]{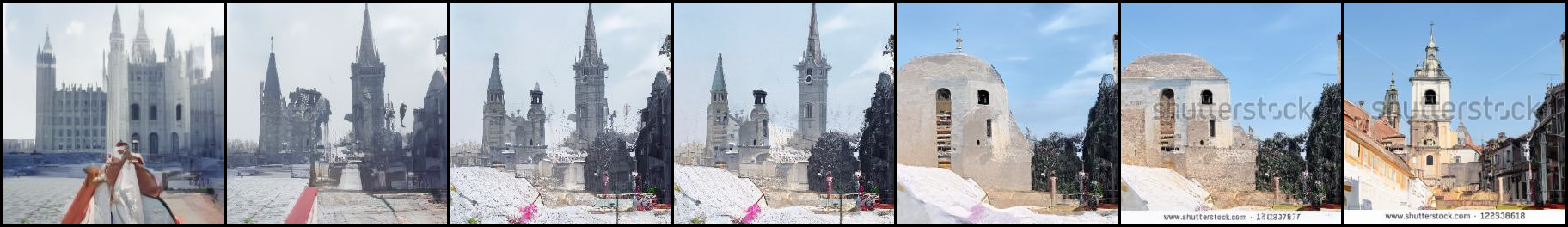}
\end{subfigure}
\caption{Interpolation of samples from DDIM with $\dim(\tau) = 50$.}
\label{fig:interp-line}
\end{figure}
Since the high level features of the DDIM sample is encoded by $\vx_T$, we are interested to see whether it would exhibit the semantic interpolation effect similar to that observed in other implicit probabilistic models, such as GANs~\citep{goodfellow2014generative}. This is different from the interpolation procedure in~\citet{ho2020denoising}, since in DDPM the same $\vx_T$ would lead to highly diverse $\vx_0$ due to the stochastic generative process\footnote{Although it might be possible if one interpolates all $T$ noises, like what is done in \citet{song2020improved}.}. In \Figref{fig:interp-line}, we show that simple interpolations in $\vx_T$ can lead to semantically meaningful interpolations between two samples. We include more details and samples in Appendix~\ref{app:interpolation}. This allows DDIM to control the generated images on a high level directly through the latent variables, which DDPMs cannot.

\subsection{Reconstruction from Latent Space}
As DDIM is the Euler integration for a particular ODE, it would be interesting to see whether it can encode from $\vx_0$ to $\vx_T$ (reverse of \eqref{eq:ddim-ode}) and reconstruct $\vx_0$ from the resulting $\vx_T$ (forward of \eqref{eq:ddim-ode})\footnote{Since $\vx_T$ and $\vx_0$ have the same dimensions, their compression qualities are not our immediate concern.}. We consider encoding and decoding on the CIFAR-10 test set with the CIFAR-10 model with $S$ steps for both encoding and decoding; we report the per-dimension mean squared error (scaled to $[0, 1]$) in Table~\ref{tab:cifar10-recon}. Our results show that DDIMs have lower reconstruction error for larger $S$ values and have properties similar to Neural ODEs and normalizing flows.
The same cannot be said for DDPMs due to their stochastic nature.
\begin{table}
    \centering
    \caption{Reconstruction error with DDIM on CIFAR-10 test set, rounded to $10^{-4}$.}
    \begin{tabular}{l|ccccccc}
    \toprule
    $S$ & 10 & 20 & 50 & 100 & 200 & 500 & 1000 \\
    \midrule
    Error & 0.014 & 0.0065 & 0.0023 & 0.0009 & 0.0004 & 0.0001 & $0.0001$ \\
    \bottomrule
    \end{tabular}
    \label{tab:cifar10-recon}
\end{table}
\section{Related Work}
Our work is based on a large family of existing methods on learning generative models as transition operators of Markov chains~\citep{sohl-dickstein2015deep,bengio2014deep,salimans2014markov,song2017a,goyal2017variational,levy2017generalizing}. Among them, denoising diffusion probabilistic models (DDPMs, \citet{ho2020denoising}) and noise conditional score networks (NCSN, \citet{song2019generative,song2020improved}) have recently achieved high sample quality comparable to GANs~\citep{brock2018large,karras2018a}. DDPMs optimize a variational lower bound to the log-likelihood, whereas NCSNs optimize the score matching objective~\citep{h2005estimation} over a nonparametric Parzen density estimator of the data~\citep{vincent2011connection,raphan2011least}. 

Despite their different motivations, DDPMs and NCSNs are closely related. Both use a denoising autoencoder objective for many noise levels, and both use a procedure similar to Langevin dynamics to produce samples~\citep{neal2011mcmc}. Since Langevin dynamics is a discretization of a gradient flow~\citep{jordan1998variational}, %
both DDPM and NCSN require many steps to achieve good sample quality. %
This aligns with the observation that DDPM and existing NCSN methods have trouble generating high-quality samples in a few iterations.

DDIM, on the other hand, is an implicit generative model~\citep{mohamed2016learning} where samples are uniquely determined from the latent variables. Hence, DDIM has certain properties that resemble GANs~\citep{goodfellow2014generative} and invertible flows~\citep{dinh2016density}, such as the ability to produce semantically meaningful interpolations. We derive DDIM from a purely variational perspective, where the restrictions of Langevin dynamics are not relevant; this could partially explain why we are able to observe superior sample quality compared to DDPM under fewer iterations. %
The sampling procedure of DDIM is also reminiscent of neural networks with continuous depth~\citep{chen2018neural,grathwohl2018ffjord}, since the samples it produces from the same latent variable have similar high-level visual features, regardless of the specific sample trajectory.

\section{Discussion}
We have presented DDIMs -- an implicit generative model trained with denoising auto-encoding / score matching objectives -- from a purely variational perspective. DDIM is able to generate high-quality samples much more efficiently than existing DDPMs and NCSNs, with the ability to perform meaningful interpolations from the latent space. The non-Markovian forward process presented here seems to suggest  continuous forward processes other than Gaussian (which cannot be done in the original diffusion framework, since Gaussian is the only stable distribution with finite variance). We also demonstrated a discrete case with a multinomial forward process in Appendix~\ref{app:discrete}, and it would be interesting to investigate similar alternatives for other combinatorial structures.

Moreover, since the sampling procedure of DDIMs is similar to that of an neural ODE, it would be interesting to see if methods that decrease the discretization error in ODEs, including multi-step methods such as Adams-Bashforth~\citep{butcher2008numerical}, could be helpful for further improving sample quality in fewer steps~\citep{queiruga2020continuous}. It is also relevant to investigate whether DDIMs exhibit other properties of existing implicit models~\citep{bau2019seeing}.

\section*{Acknowledgements}
The authors would like to thank Yang Song and Shengjia Zhao for helpful discussions over the ideas, Kuno Kim for reviewing an earlier draft of the paper, and Sharvil Nanavati and Sophie Liu for identifying typos. This research was supported by NSF (\#1651565, \#1522054, \#1733686), ONR (N00014-19-1-2145), AFOSR (FA9550-19-1-0024), and Amazon AWS.

\bibliography{iclr2021_conference}
\bibliographystyle{iclr2021_conference}

\newpage
\appendix

\section{Non-Markovian Forward Processes for a Discrete Case}
\label{app:discrete}
In this section, we describe a non-Markovian forward processes for discrete data and corresponding variational objectives. Since the focus of this paper is to accelerate reverse models corresponding to the Gaussian diffusion, we leave empirical evaluations as future work. 

For a categorical observation $\vx_0$ that is a one-hot vector with $K$ possible values, we define the forward process as follows. First, we have $q(\vx_t | \vx_0)$ as the following categorical distribution:
\begin{align}
    q(\vx_t | \vx_0) = \mathrm{Cat}(\alpha_t \vx_0 + (1 - \alpha_t) \vone_K)
\end{align}
where $\vone_K \in \R^{K}$ is a vector with all entries being $1/K$, and $\alpha_t$ decreasing from $\alpha_0 = 1$ for $t = 0$ to $\alpha_T = 0$ for $t = T$.
Then we define $q(\vx_{t-1} | \vx_t, \vx_0)$ as the following mixture distribution:
\begin{align}
    q(\vx_{t-1} | \vx_t, \vx_0) = \begin{cases}
    \mathrm{Cat}(\vx_t) & \text{with probability } \sigma_t \\
    \mathrm{Cat}(\vx_0) & \text{with probability } (\alpha_{t-1} - \sigma_t \alpha_t) \\
    \mathrm{Cat}(\vone_K) & \text{with probability } (1 - \alpha_{t-1}) - (1 - \alpha_t) \sigma_t
    \end{cases},
\end{align}
or equivalently:
\begin{align}
    q(\vx_{t-1} | \vx_t, \vx_0) = \mathrm{Cat}\left(\sigma_t \vx_t + (\alpha_{t-1} - \sigma_t \alpha_t) \vx_0 + ((1 - \alpha_{t-1}) - (1 - \alpha_t) \sigma_t) \vone_K\right),
\end{align}
which is consistent with how we have defined $q(\vx_t | \vx_0)$. 

Similarly, we can define our reverse process $p_\theta(\vx_{t-1} | \vx_t)$ as:
\begin{align}
    p_\theta(\vx_{t-1} | \vx_t) = \mathrm{Cat}\left(\sigma_t \vx_t + (\alpha_{t-1} - \sigma_t \alpha_t) f_\theta^{(t)}(\vx_t) + ((1 - \alpha_{t-1}) - (1 - \alpha_t) \sigma_t) \vone_K\right),
\end{align}
where $f_\theta^{(t)}(\vx_t)$ maps $\vx_t$ to a $K$-dimensional vector. As $(1 - \alpha_{t-1}) - (1 - \alpha_t) \sigma_t \to 0$, the sampling process will become less stochastic, in the sense that it will either choose $\vx_t$ or the predicted $\vx_0$ with high probability.
The KL divergence
\begin{align}
    \KL(q(\vx_{t-1} | \vx_t, \vx_0) \Vert p_\theta(\vx_{t-1} | \vx_t))
\end{align}
is well-defined, and is simply the KL divergence between two categoricals. Therefore, the resulting variational objective function should be easy to optimize as well. Moreover, as KL divergence is convex, we have this upper bound (which is tight when the right hand side goes to zero):
\begin{align}
    \KL(q(\vx_{t-1} | \vx_t, \vx_0) \Vert p_\theta(\vx_{t-1} | \vx_t)) \leq (\alpha_{t-1} - \sigma_t \alpha_t) \KL(\mathrm{Cat}(\vx_0) \Vert \mathrm{Cat}(f_\theta^{(t)}(\vx_t))). \nonumber
\end{align}
The right hand side is simply a multi-class classification loss (up to constants), so we can arrive at similar arguments regarding how changes in $\sigma_t$ do not affect the objective (up to re-weighting).

\section{Proofs}
\label{app:proofs}

\begin{lemma}
\label{lemma:reverse-process-consistency}
For $q_\sigma(\vx_{1:T} | \vx_0)$ defined in \eqref{eq:diff-new} and $q_\sigma(\vx_{t-1} | \vx_t, \vx_0)$ defined in \eqref{eq:reversed-close-form}, we have:
\begin{align}
q_\sigma(\vx_{t} | \vx_0) = \gN(\sqrt{\alpha_t} \vx_0, (1 - \alpha_t) \mI)
\end{align}
\end{lemma}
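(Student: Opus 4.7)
I will prove the lemma by backward induction on $t$, from $t=T$ down to $t=1$. (Note that the variance stated in the lemma should read $(1-\alpha_t)\mI$; this is the form that is consistent with \eqref{eq:reparam_xt} and that \eqref{eq:reversed-close-form} is engineered to preserve.)

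\textbf{Base case.} For $t=T$, the identity $q_\sigma(\vx_T\mid\vx_0) = \gN(\sqrt{\alpha_T}\vx_0,(1-\alpha_T)\mI)$ is simply the definition stated just above \eqref{eq:reversed-close-form}.

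\textbf{Inductive step.} Suppose $q_\sigma(\vx_t\mid\vx_0) = \gN(\sqrt{\alpha_t}\vx_0,(1-\alpha_t)\mI)$. By the factorization in \eqref{eq:diff-new}, the marginal at step $t-1$ is
\begin{align*}
q_\sigma(\vx_{t-1}\mid\vx_0) = \int q_\sigma(\vx_{t-1}\mid\vx_t,\vx_0)\,q_\sigma(\vx_t\mid\vx_0)\,\diff\vx_t.
\end{align*}
Both factors are Gaussian in their free argument, and the conditional mean in \eqref{eq:reversed-close-form} is affine in $\vx_t$, so the result of the marginalization is again Gaussian (standard affine-Gaussian marginal result, derivable either from the joint-density calculation or by writing $\vx_t = \sqrt{\alpha_t}\vx_0 + \sqrt{1-\alpha_t}\,\epsilon$ and $\vx_{t-1} = \mu_t(\vx_t,\vx_0) + \sigma_t\epsilon'$ for independent standard Gaussians $\epsilon,\epsilon'$). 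It therefore suffices to compute its mean and covariance.

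For the mean, applying the tower property and using $\E[\vx_t\mid\vx_0]=\sqrt{\alpha_t}\vx_0$ makes the second term in the conditional mean vanish:
\begin{align*}
\E[\vx_{t-1}\mid\vx_0] = \sqrt{\alpha_{t-1}}\vx_0 + \sqrt{1-\alpha_{t-1}-\sigma_t^2}\cdot\frac{\E[\vx_t\mid\vx_0] - \sqrt{\alpha_t}\vx_0}{\sqrt{1-\alpha_t}} = \sqrt{\alpha_{t-1}}\vx_0.
\end{align*}
For the covariance, the law of total variance gives
\begin{align*}
\mathrm{Cov}(\vx_{t-1}\mid\vx_0) = \sigma_t^2\mI + \frac{1-\alpha_{t-1}-\sigma_t^2}{1-\alpha_t}\cdot(1-\alpha_t)\mI = (1-\alpha_{t-1})\mI,
\end{align*}
where the first summand is the conditional covariance and the second is the covariance of the conditional mean (an affine function of $\vx_t$ whose linear coefficient is $\sqrt{(1-\alpha_{t-1}-\sigma_t^2)/(1-\alpha_t)}\,\mI$). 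Combining the two gives exactly $\gN(\sqrt{\alpha_{t-1}}\vx_0,(1-\alpha_{t-1})\mI)$, completing the induction.

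\textbf{Main obstacle.} There is no real obstacle here; the lemma is essentially a verification that the coefficients in \eqref{eq:reversed-close-form} are exactly the ones that make the marginal variance telescope from $(1-\alpha_t)$ to $(1-\alpha_{t-1})$ after absorbing the injected noise $\sigma_t^2\mI$. The only thing requiring a little care is citing (or briefly justifying) that the Gaussian integral yields a Gaussian marginal when the conditional mean is affine in the integrated variable; I would either appeal to the standard joint-Gaussian identity (e.g., Bishop \S2.3.3) or give the one-line reparameterization argument sketched above.
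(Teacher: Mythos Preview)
Your proposal is correct and essentially identical to the paper's proof: both use backward induction from $t=T$, invoke the affine-Gaussian marginalization (the paper cites Bishop (2.115), you cite Bishop \S2.3.3 or the reparameterization), and carry out the same mean and covariance computations. Your observation that the covariance in the lemma statement should read $(1-\alpha_t)\mI$ rather than $\sqrt{1-\alpha_t}\,\mI$ is also correct; the paper's own proof silently works with the non-square-rooted form.
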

\begin{proof}
Assume for any $t \leq T$, $q_\sigma(\vx_{t} | \vx_0) = \gN(\sqrt{\alpha_t} \vx_0, (1 - \alpha_t) \mI)$ holds, if:
\begin{align}
    q_\sigma(\vx_{t-1} | \vx_0) = \gN(\sqrt{\alpha_{t-1}} \vx_0, (1 - \alpha_{t-1}) \mI)
\end{align}
then we can prove the statement with an induction argument for $t$ from $T$ to $1$, since the base case ($t = T$) already holds.

First, we have that
$$
    q_\sigma(\vx_{t-1} | \vx_0) := \int_{\vx_t} q_\sigma(\vx_{t} | \vx_0) q_\sigma(\vx_{t-1} | \vx_t, \vx_0) \diff \vx_t
$$
and
\begin{gather}
   q_\sigma(\vx_{t} | \vx_0) = \gN(\sqrt{\alpha_t} \vx_0, (1 - \alpha_t) \mI) \\
   q_\sigma(\vx_{t-1} | \vx_t, \vx_0) = \gN\left(\sqrt{\alpha_{t-1}} \vx_{0} + \sqrt{1 - \alpha_{t-1} - \sigma^2_t} \cdot {\frac{\vx_{t}  - \sqrt{\alpha_{t}} \vx_0}{\sqrt{1 - \alpha_{t}}}}, \sigma_t^2 \mI \right).
\end{gather}
From \citet{bishop2006pattern} (2.115), we have that $q_\sigma(\vx_{t-1} | \vx_0)$ is Gaussian, denoted as $\gN(\mu_{t-1}, \Sigma_{t-1})$ where
\begin{align}
    \mu_{t-1} & = \sqrt{\alpha_{t-1}} \vx_{0} + \sqrt{1 - \alpha_{t-1} - \sigma^2_t} \cdot {\frac{\sqrt{\alpha_t} \vx_0  - \sqrt{\alpha_{t}} \vx_0}{\sqrt{1 - \alpha_{t}}}} \\
    & = \sqrt{\alpha_{t-1}} \vx_{0}
\end{align}
and
\begin{align}
    \Sigma_{t-1} & = \sigma_t^2 \mI + \frac{1 - \alpha_{t-1} - \sigma^2_t}{1 - \alpha_t} (1 - \alpha_t) \mI = (1 - \alpha_{t-1}) \mI
\end{align}
Therefore, $q_\sigma(\vx_{t-1} | \vx_0) = \gN(\sqrt{\alpha_{t-1}} \vx_0, (1 - \alpha_{t-1}) \mI)$, which allows us to apply the induction argument.
\end{proof}

\unifiedklobj*

\begin{proof}
From the definition of $J_\sigma$:
\begin{align}
   J_\sigma(\epsilon_\theta) & := \bb{E}_{\vx_{0:T} \sim q(\vx_{0:T})} \left[\log q_\sigma(\vx_T | \vx_0) + \sum_{t=2}^{T} \log q_\sigma(\vx_{t-1} | \vx_t, \vx_0) - \sum_{t=1}^{T} \log p_\theta^{(t)}(\vx_{t-1} | \vx_t) \right] \\
   & \equiv \bb{E}_{\vx_{0:T} \sim q(\vx_{0:T})} \left[\sum_{t=2}^{T} \KL(q_\sigma(\vx_{t-1} | \vx_t, \vx_0)) \Vert p_\theta^{(t)}(\vx_{t-1} | \vx_t)) - \log p_\theta^{(1)}(\vx_0 | \vx_1)  \right] \nonumber
\end{align}
where we use $\equiv$ to denote ``equal up to a value that does not depend on $\epsilon_\theta$ (but may depend on $q_\sigma$)''. For $t > 1$:
\begin{align}
   & \bb{E}_{\vx_{0}, \vx_t \sim q(\vx_{0}, \vx_t)} [\KL(q_\sigma(\vx_{t-1} | \vx_t, \vx_0)) \Vert p_\theta^{(t)}(\vx_{t-1} | \vx_t))] \nonumber \\
   = & \ \bb{E}_{\vx_{0}, \vx_t \sim q(\vx_{0}, \vx_t)}[\KL(q_\sigma(\vx_{t-1} | \vx_t, \vx_0)) \Vert q_\sigma(\vx_{t-1} | \vx_t, f_{\theta}^{(t)}(\vx_t)))] \nonumber \\
   \equiv & \ \bb{E}_{\vx_{0}, \vx_t \sim q(\vx_{0}, \vx_t)}\left[\frac{\norm{\vx_0 - f_{\theta}^{(t)}(\vx_t)}_2^2}{2 \sigma_t^2}\right] \\
   = & \ \bb{E}_{\vx_{0} \sim q(\vx_{0}), \epsilon \sim \gN(\vzero, \mI), \vx_t = \sqrt{\alpha_t} \vx_0 + \sqrt{1 - \alpha_t} \epsilon}\left[\frac{\norm{\frac{{(\vx_t - \sqrt{1 - \alpha_t} \epsilon)}}{\sqrt{\alpha_t}} - \frac{(\vx_t - \sqrt{1 - \alpha_t} \epsilon_{\theta}^{(t)}(\vx_t))}{\sqrt{\alpha_t}}}_2^2}{2 \sigma_t^2}\right] \\
   = & \ \bb{E}_{\vx_{0} \sim q(\vx_{0}), \epsilon \sim \gN(\vzero, \mI), \vx_t = \sqrt{\alpha_t} \vx_0 + \sqrt{1 - \alpha_t} \epsilon}\left[\frac{\norm{\epsilon - \epsilon_{\theta}^{(t)}(\vx_t)}_2^2}{2 d \sigma_t^2 \alpha_t}\right]
\end{align}
where $d$ is the dimension of $\vx_0$. For $t = 1$:
\begin{align}
   & \ \bb{E}_{\vx_{0}, \vx_1 \sim q(\vx_{0}, \vx_1)} \left[ - \log p_\theta^{(1)}(\vx_0 | \vx_1)  \right] \equiv \bb{E}_{\vx_{0}, \vx_1 \sim q(\vx_{0}, \vx_1)}\left[\frac{\norm{\vx_0 - f_{\theta}^{(t)}(\vx_1)}_2^2}{2 \sigma_1^2}\right] \\
   = & \ \bb{E}_{\vx_{0} \sim q(\vx_{0}), \epsilon \sim \gN(\vzero, \mI), \vx_1 = \sqrt{\alpha_1} \vx_0 + \sqrt{1 - \alpha_t} \epsilon}\left[\frac{\norm{\epsilon - \epsilon_{\theta}^{(1)}(\vx_1)}_2^2}{2 d \sigma_1^2 \alpha_1}\right]
\end{align}
Therefore, when $\gamma_t = 1 / (2 d \sigma_t^2 \alpha_t)$ for all $t \in \{1, \ldots, T\}$, we have
\begin{align}
    J_\sigma(\epsilon_\theta) \equiv \sum_{t=1}^{T} \frac{1}{2 d \sigma_t^2 \alpha_t} \bb{E}\left[\norm{\epsilon_{\theta}^{(t)}(\vx_t) - \epsilon_t}_2^2 \right] = L_\gamma(\epsilon_\theta)
\end{align}
for all $\epsilon_\theta$. From the definition of ``$\equiv$'', we have that $J_\sigma = L_\gamma + C$.
\end{proof}

\equivalence*

\begin{proof}
In the context of the proof, we consider $t$ as a continous, independent ``time'' variable and $\vx$ and $\alpha$ as functions of $t$. First, let us consider a reparametrization between DDIM and the VE-SDE\footnote{Refer to \citep{song2020score} for more details of VE-SDE.} by introducing the variables $\bar{\vx}$ and $\sigma$:
\begin{align}
    \bar{\vx}(t) = \bar{\vx}(0) + \sigma(t) \epsilon, \quad \epsilon \sim \gN(0, \mI),
\end{align}
for $t \in [0, \infty)$ and an increasing continuous function $\sigma: \bb{R}_{\geq 0} \to \bb{R}_{\geq 0}$ where $\sigma(0) = 0$. 

We can then define $\alpha(t)$ and $\vx(t)$ corresponding to DDIM case as:
\begin{gather}
    \bar{\vx}(t) = \frac{\vx(t)}{\sqrt{\alpha(t)}} \\
    \sigma(t) = \sqrt{\frac{1 - \alpha(t)}{\alpha(t)}}.
\end{gather}
This also means that:
\begin{gather}
    \vx(t) = \frac{\bar{\vx}(t)}{\sqrt{\sigma^2(t) + 1}} \\
    \alpha(t) = \frac{1}{1 + \sigma^2(t)},
\end{gather}
which establishes an bijection between $(\vx, \alpha)$ and $(\bar{\vx}, \sigma)$. From \Eqref{eq:reparam_xt} we have (note that $\alpha(0) = 1$):
\begin{align}
    \frac{\vx(t)}{\sqrt{\alpha(t)}} = \frac{\vx(0)}{\sqrt{\alpha(0)}} + \sqrt{\frac{1 - \alpha(t)}{\alpha(t)}} \epsilon, \quad \epsilon \sim \gN(0, \mI)
\end{align}
which can be reparametrized into a form that is consistent with VE-SDE:
\begin{align}
    \bar{\vx}(t) = \bar{\vx}(0) + \sigma(t) \epsilon.
\end{align}

Now, we derive the ODE forms for both DDIM and VE-SDE and show that they are equivalent.

\paragraph{ODE form for DDIM} We repeat \Eqref{eq:ddim-euler} here:
\begin{align}
    \frac{\vx_{t-\Delta t}}{\sqrt{\alpha_{t-\Delta t}}}  = \frac{\vx_t}{\sqrt{\alpha_t}}  + \left(\sqrt{\frac{1 - \alpha_{t-\Delta t}}{\alpha_{t-\Delta t}}} - \sqrt{\frac{1 - \alpha_{t}}{\alpha_t}}\right) \epsilon_\theta^{(t)}(\vx_t),
\end{align}
which is equivalent to:
\begin{align}
    \bar{\vx}(t-\Delta t) = \bar{\vx}(t) + (\sigma(t-\Delta t) - \sigma(t)) \cdot \epsilon_\theta^{(t)}(\vx(t))
\end{align}
Divide both sides by $(-\Delta t)$ and as $\Delta t \to 0$, we have:
\begin{align}
    \frac{\diff \bar{\vx}(t)}{\diff t} = \frac{\diff \sigma(t)}{\diff t} \epsilon_\theta^{(t)}\left(\frac{\bar{\vx}(t)}{\sqrt{\sigma^2(t) + 1}}\right), \label{eq:ddim:differential}
\end{align}
which is exactly what we have in \Eqref{eq:ddim-ode}. 

We note that for the optimal model, $\epsilon_\theta^{(t)}$ is a minimizer:
\begin{align}
    \epsilon_\theta^{(t)} = \argmin_{f_t} \bb{E}_{\vx(0) \sim q(\vx), \epsilon \sim \gN(0, \mI)}[\norm{f_t(\vx(t)) - \epsilon}_2^2] \label{eq:dsm-ddim}
\end{align}
where $\vx(t) = \sqrt{\alpha(t)} \vx(t) + \sqrt{1 - \alpha(t)} \epsilon$.

\paragraph{ODE form for VE-SDE} Define $p_t(\bar{\vx})$ as the data distribution perturbed with $\sigma^2(t)$ variance Gaussian noise. The probability flow for VE-SDE is defined as \cite{song2020score}:
\begin{align}
    \diff \bar{\vx} = -\frac{1}{2} g(t)^2 \nabla_{\bar{\vx}} \log p_t(\bar{\vx}) \diff t  \label{eq:yang-ode}
\end{align}
where $g(t) = \sqrt{\frac{\diff \sigma^2(t)}{\diff t}}$ is the diffusion coefficient, and $\nabla_{\bar{\vx}} \log p_t(\bar{\vx})$ is the score of $p_t$. 

The $\sigma(t)$-perturbed score function $\nabla_{\bar{\vx}} \log p_t(\bar{\vx})$ is also a minimizer (from denoising score matching~\citep{vincent2011connection}):
\begin{align}
    \nabla_{\bar{\vx}} \log p_t = \argmin_{g_t} \bb{E}_{\vx(0) \sim q(\vx), \epsilon \sim \gN(0, \mI)}[\norm{g_t(\bar{\vx}) + \epsilon / \sigma(t)}_2^2] \label{eq:dsm-ve}
\end{align}
where $\bar{\vx}(t) = \bar{\vx}(t) + \sigma(t) \epsilon$. 

Since there is an equivalence between $\vx(t)$ and $\bar{\vx}(t)$, we have the following relationship:
\begin{gather}
    \nabla_{\bar{\vx}} \log p_t(\bar{\vx}) = - \frac{\epsilon_\theta^{(t)}\left(\frac{\bar{\vx}(t)}{\sqrt{\sigma^2(t) + 1}}\right)}{\sigma(t)} \label{eq:score-equivalence}
\end{gather}
from \Eqref{eq:dsm-ddim} and \Eqref{eq:dsm-ve}. Plug \Eqref{eq:score-equivalence} and definition of $g(t)$ in \Eqref{eq:yang-ode}, we have:
\begin{align}
    \diff \bar{\vx}(t) = \frac{1}{2} \frac{\diff \sigma^2(t)}{\diff t} \frac{\epsilon_\theta^{(t)}\left(\frac{\bar{\vx}(t)}{\sqrt{\sigma^2(t) + 1}}\right)}{\sigma(t)} \diff t,
\end{align}
and we have the following by rearranging terms:
\begin{align}
    \frac{\diff \bar{\vx}(t)}{\diff t} = \frac{\diff \sigma(t)}{\diff t} \epsilon_\theta^{(t)}\left(\frac{\bar{\vx}(t)}{\sqrt{\sigma^2(t) + 1}}\right)
\end{align}
which is equivalent to \Eqref{eq:ddim:differential}. In both cases the initial conditions are $\bar{\vx}(T) \sim \gN(\vzero, \sigma^2(T) \mI)$, so the resulting ODEs are identical.
\end{proof}
\section{Additional Derivations}
\subsection{Accelerated sampling processes}
\label{app:acceleration}

In the accelerated case, we can consider the inference process to be factored as:
\begin{align}
    q_{\sigma, \tau}(\vx_{1:T} | \vx_0) = q_{\sigma, \tau}(\vx_{\tau_{S}} | \vx_0) \prod_{i=1}^{S} q_{\sigma, \tau}(\vx_{\tau_{i-1}} | \vx_{\tau_{i}}, \vx_0) \prod_{t \in \bar{\tau}} q_{\sigma, \tau}(\vx_{t} | \vx_0) \label{eq:diff-new-tau}
\end{align}
where $\tau$ is a sub-sequence of $[1, \ldots, T]$ of length $S$ with $\tau_S = T$, and let $\bar{\tau} := \{1, \ldots, T\} \setminus \tau$ be its complement. Intuitively, the graphical model of $\{\vx_{\tau_i}\}_{i=1}^{S}$ and $\vx_0$ form a chain, whereas the graphical model of $\{\vx_{t}\}_{t \in \bar{\tau}}$ and $\vx_0$ forms a star graph. We define:
\begin{gather}
    q_{\sigma, \tau}(\vx_{t} | \vx_0) = \gN(\sqrt{\alpha_t} \vx_0, (1 - \alpha_t) \mI) \quad \forall t \in \bar{\tau} \cup \{T\} \\
    q_{\sigma, \tau}(\vx_{\tau_{i-1}} | \vx_{\tau_{i}}, \vx_0) = \gN\left(\sqrt{\alpha_{\tau_{i-1}}} \vx_{0} + \sqrt{1 - \alpha_{\tau_{i-1}} - \sigma^2_{\tau_i}} \cdot {\frac{\vx_{\tau_i}  - \sqrt{\alpha_{\tau_i}} \vx_0}{\sqrt{1 - \alpha_{\tau_i}}}}, \sigma_{{\tau_i}}^2 \mI \right) \ \forall i \in [S] \nonumber
\end{gather}
where the coefficients are chosen such that:
\begin{align}
    q_{\sigma, \tau}(\vx_{\tau_i} | \vx_0) = \gN(\sqrt{\alpha_{\tau_i}} \vx_0, (1 - \alpha_{\tau_i}) \mI) \quad \forall i \in [S]
\end{align}
i.e., the ``marginals'' match.

The corresponding ``generative process'' is defined as:
\begin{align}
    p_\theta(\vx_{0:T}) := \underbrace{p_\theta(\vx_T) \prod_{i=1}^{S} p^{(\tau_i)}_\theta(\vx_{\tau_{i-1}} | \vx_{\tau_i})}_{\text{use to produce samples}} \times \underbrace{\prod_{t \in \bar{\tau}} p_\theta^{(t)}(\vx_{0} | \vx_t)}_{\text{in variational objective}} 
\end{align}
where only part of the models are actually being used to produce samples. The conditionals are:
\begin{gather}
    p_\theta^{(\tau_i)}(\vx_{\tau_{i-1}} | \vx_{\tau_i}) = q_{\sigma, \tau}(\vx_{\tau_{i-1}} | \vx_{\tau_i}, f_{\theta}^{(\tau_i)}(\vx_{\tau_{i-1}})) \quad  \text{if} \ i \in [S], i > 1 \\
     p_\theta^{(t)}(\vx_0 | \vx_{t}) = \gN(f_\theta^{(t)}(\vx_t), \sigma_t^2 \mI)  \quad \text{otherwise,}
\end{gather}
where we leverage $q_{\sigma, \tau}(\vx_{\tau_{i-1}} | \vx_{\tau_{i}}, \vx_0)$ as part of the inference process (similar to what we have done in Section~\ref{sec:variational}). The resulting variational objective becomes (define $\vx_{\tau_{L+1}} = \varnothing$ for conciseness):
\begin{align}
   J(\epsilon_\theta) & = \bb{E}_{\vx_{0:T} \sim q_{\sigma, \tau}(\vx_{0:T})}[ \log q_{\sigma, \tau}(\vx_{1:T} | \vx_0) - \log p_\theta(\vx_{0:T})] \\
   & =\bb{E}_{\vx_{0:T} \sim q_{\sigma, \tau}(\vx_{0:T})}\Bigg[ \sum_{t \in \bar{\tau}} \KL(q_{\sigma, \tau}(\vx_{t} | \vx_0) \Vert p_\theta^{(t)}(\vx_0 | \vx_{t}) \\
   & \qquad \qquad \qquad \qquad + \sum_{i=1}^{L} \KL(q_{\sigma, \tau}(\vx_{\tau_{i-1}} | \vx_{\tau_{i}}, \vx_0) \Vert p_\theta^{(\tau_i)}(\vx_{\tau_{i-1}} | \vx_{\tau_i})) ) \Bigg] \nonumber
\end{align}
where each KL divergence is between two Gaussians with variance independent of $\theta$. A similar argument to the proof used in Theorem~\ref{thm:unifiedklobj} can show that the variational objective $J$ can also be converted to an objective of the form $L_\gamma$.

\subsection{Derivation of denoising objectives for DDPMs}
\label{app:ddpm}

We note that in \citet{ho2020denoising}, a diffusion hyperparameter {\color{teal} $\beta_t$}\footnote{In this section we use teal to color notations used in \citet{ho2020denoising}.} is first introduced, and then relevant variables {\color{teal}$\alpha_t := 1 - \beta_t$} and {\color{teal}$\bar{\alpha}_t = \prod_{t=1}^{T} \alpha_t$} are defined. In this paper, we have used the notation $\alpha_{t}$ to represent the variable {\color{teal} $\bar{\alpha}_t$} in \citet{ho2020denoising} for three reasons. First, it makes it more clear that we only need to choose one set of hyperparameters, reducing possible cross-references of the derived variables. Second, it allows us to introduce the generalization as well as the acceleration case easier, because the inference process is no longer motivated by a diffusion. Third, there exists an isomorphism between $\alpha_{1:T}$ and $1, \ldots, T$, which is not the case for {\color{teal} $\beta_t$}.

In this section, we use {\color{teal} $\beta_t$} and {\color{teal} $\alpha_t$} to be more consistent with the derivation in \citet{ho2020denoising}, where
\begin{gather}
{\color{teal} \alpha_t} = \frac{\alpha_t}{\alpha_{t-1}} \\
    {\color{teal} \beta_t} = 1 - \frac{\alpha_t}{\alpha_{t-1}} 
\end{gather}
can be uniquely determined from $\alpha_t$ (i.e. {\color{teal}$\bar{\alpha}_t$}).

First, from the diffusion forward process:
\begin{align}
    q(\vx_{t-1} | \vx_t, \vx_0) = \gN\Bigg(\underbrace{\frac{\sqrt{\alpha_{t-1}}{\color{teal} \beta_t}}{1 - \alpha_t} \vx_0 + \frac{\sqrt{{\color{teal} \alpha_t}} (1 - \alpha_{t-1})}{1 - \alpha_t} \vx_t}_{\color{teal} \tilde{\mu}(\vx_t, \vx_0)}, \frac{1 - \alpha_{t-1}}{1 - \alpha_t} {\color{teal} \beta_t}\mI\Bigg) \nonumber
\end{align}

\citet{ho2020denoising} considered a specific type of $p_\theta^{(t)}(\vx_{t-1} | \vx_t)$:
\begin{align}
    p_\theta^{(t)}(\vx_{t-1} | \vx_t) = \gN\left({\color{teal} \mu_\theta(\vx_t, t)}, \sigma_t \mI\right)
\end{align}
which leads to the following variational objective:
\begin{align}
   {\color{teal} L} & := \bb{E}_{\vx_{0:T} \sim q(\vx_{0:T})} \left[q(\vx_T | \vx_0) + \sum_{t=2}^{T} \log q(\vx_{t-1} | \vx_t, \vx_0) - \sum_{t=1}^{T} \log p_\theta^{(t)}(\vx_{t-1} | \vx_t) \right] \\
   & \equiv \bb{E}_{\vx_{0:T} \sim q(\vx_{0:T})} \left[\sum_{t=2}^{T} \underbrace{\KL(q(\vx_{t-1} | \vx_t, \vx_0)) \Vert p_\theta^{(t)}(\vx_{t-1} | \vx_t))}_{\color{teal} L_{t-1}} - \log p_\theta^{(1)}(\vx_0 | \vx_1) \right] \nonumber
\end{align}
One can write:
\begin{align}
    {\color{teal} L_{t-1}} = \bb{E}_q\left[\frac{1}{2\sigma_t^2} \norm{{\color{teal} \mu_\theta(\vx_t, t)} - {\color{teal} \tilde{\mu}(\vx_t, \vx_0)}}_2^2\right]
\end{align}
\citet{ho2020denoising} chose the parametrization
\begin{align}
    {\color{teal} \mu_\theta(\vx_t, t)} = \frac{1}{\sqrt{{\color{teal} \alpha_t}}} \left(\vx_t - \frac{{\color{teal} \beta_t}}{\sqrt{1 - \alpha_t}} {\color{teal} \epsilon_\theta(\vx_t, t)}\right)
\end{align}
which can be simplified to:
\begin{align}
    {\color{teal} L_{t-1}} = \bb{E}_{\vx_0, \epsilon}\left[\frac{{\color{teal} \beta_t}^2}{2\sigma_t^2 (1 - \alpha_t) {\color{teal} \alpha_t}} \norm{\epsilon - {\color{teal} \epsilon_\theta(\sqrt{\alpha_t} \vx_0 + \sqrt{1 - \alpha_t} \epsilon, t)}}_2^2\right]
\end{align}

\section{Experimental Details}
\label{app:exp}
\subsection{Datasets and architectures}
We consider 4 image datasets with various resolutions: CIFAR10 ($32 \times 32$, unconditional), CelebA ($64 \times 64$), LSUN Bedroom ($256 \times 256$) and LSUN Church ($256 \times 256$). For all datasets, we set the hyperparameters $\alpha$ according to the heuristic in \citep{ho2020denoising} to make the results directly comparable. We use the same model for each dataset, and only compare the performance of different generative processes. For CIFAR10, Bedroom and Church, we obtain the pretrained checkpoints from the original DDPM implementation; for CelebA, we trained our own model using the denoising objective $L_\vone$.

Our architecture for $\epsilon_\theta^{(t)}(\vx_t)$ follows that in \citet{ho2020denoising}, which is a U-Net~\citep{ronneberger2015u} based on a Wide ResNet~\citep{zagoruyko2016wide}. We use the pretrained models from \citet{ho2020denoising} for CIFAR10, Bedroom and Church, and train our own model for the CelebA $64 \times 64$ model (since a pretrained model is not provided). Our CelebA model has five feature map resolutions from $64 \times 64$ to $4 \times 4$, and we use the original CelebA dataset (not CelebA-HQ) using the \href{https://github.com/NVlabs/stylegan/blob/master/dataset_tool.py#L484-L499}{\underline{pre-processing technique}} from the StyleGAN~\citep{karras2018a} repository. %

\begin{table}[H]
    \centering
    \caption{LSUN Bedroom and Church image generation results, measured in FID. For 1000 steps DDPM, the FIDs are 6.36 for Bedroom and 7.89 for Church.}
    \begin{tabular}{c|GGGG|GGGC}
    \toprule
      & \multicolumn{4}{c|}{Bedroom ($256 \times 256$)} & \multicolumn{4}{c}{Church ($256 \times 256$)} \\
    $\dim(\tau)$ & 10 & 20 & 50 & 100 & 10 & 20 & 50 & 100 \\\midrule
    DDIM ($\eta = 0.0$) & \textbf{16.95} & \textbf{8.89} & \textbf{6.75} & \textbf{6.62} & \textbf{19.45} & \textbf{12.47} & \textbf{10.84} & 10.58 \\ %
    DDPM ($\eta = 1.0$) & 42.78 & 22.77 & 10.81 & 6.81 & 51.56 & 23.37 & 11.16 & \textbf{8.27} \\
        \bottomrule
    \end{tabular}
    \label{tab:lsun-fid}
\end{table}

\subsection{Reverse process sub-sequence selection}
\label{app:tau}
We consider two types of selection procedure for $\tau$ given the desired $\dim(\tau) < T$:
\begin{itemize}
    \item \textbf{Linear}: we select the timesteps such that $\tau_i = \floor{c i}$ for some $c$;
    \item \textbf{Quadratic}: we select the timesteps such that $\tau_i = \floor{c i^2}$ for some $c$.
\end{itemize}
The constant value $c$ is selected such that $\tau_{-1}$ is close to $T$. We used \textit{quadratic} for CIFAR10 and \textit{linear} for the remaining datasets. These choices achieve slightly better FID than their alternatives in the respective datasets.

\subsection{Closed form equations for each sampling step}
\label{app:equations}

From the general sampling equation in \eqref{eq:sample-eq-gen}, we have the following update equation:
\begin{align}
    \vx_{\tau_{i-1}}(\eta) = \sqrt{\alpha_{\tau_{i-1}}} \left(\frac{\vx_{\tau_{i}} - \sqrt{1 - \alpha_{\tau_i}} \epsilon_\theta^{(\tau_i)}(\vx_{\tau_{i}})}{\sqrt{\alpha_{\tau_i}}}\right) + \sqrt{1 - \alpha_{\tau_{i-1}} - \sigma_{\tau_i}(\eta)^2} \cdot \epsilon_\theta^{(\tau_i)}(\vx_{\tau_{i}}) + \sigma_{\tau_i}(\eta) \epsilon \nonumber
\end{align}
where
$$\sigma_{\tau_{i}}(\eta) = \eta \sqrt{\frac{1 - \alpha_{\tau_{i-1}}}{1 - \alpha_{\tau_{i}}}}\sqrt{1 - \frac{\alpha_{\tau_i}}{\alpha_{\tau_{i-1}}}}$$
For the case of $\hat{\sigma}$ (DDPM with a larger variance), the update equation becomes:
\begin{align}
    \vx_{\tau_{i-1}} = \sqrt{\alpha_{\tau_{i-1}}} \left(\frac{\vx_{\tau_{i}} - \sqrt{1 - \alpha_{\tau_i}} \epsilon_\theta^{(\tau_i)}(\vx_{\tau_{i}})}{\sqrt{\alpha_{\tau_i}}}\right) + \sqrt{1 - \alpha_{\tau_{i-1}} - \sigma_{\tau_i}(1)^2} \cdot \epsilon_\theta^{(\tau_i)}(\vx_{\tau_{i}}) + \hat{\sigma}_{\tau_{i}} \epsilon \nonumber
\end{align}
which uses a different coefficient for $\epsilon$ compared with the update for $\eta = 1$, but uses the same coefficient for the non-stochastic parts. This update is more stochastic than the update for $\eta = 1$, which explains why it achieves worse performance when $\dim(\tau)$ is small.

\subsection{Samples and Consistency}
\label{app:samples}

We show more samples in \Figref{fig:cifar10-samples} (CIFAR10), \Figref{fig:celeba-samples} (CelebA), \Figref{fig:church-samples} (Church) and consistency results of DDIM in \Figref{fig:celeba-consistency} (CelebA).

\begin{figure}
    \centering
    \begin{subfigure}{\textwidth}
    \centering
    \includegraphics[width=0.8\textwidth]{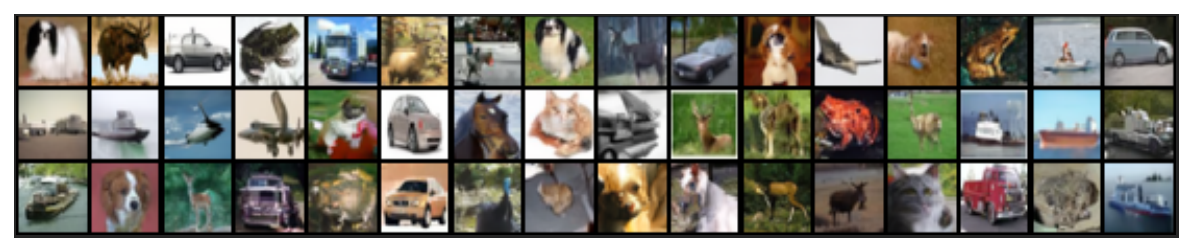}
    \end{subfigure}
    ~
    \begin{subfigure}{\textwidth}
    \centering
    \includegraphics[width=0.8\textwidth]{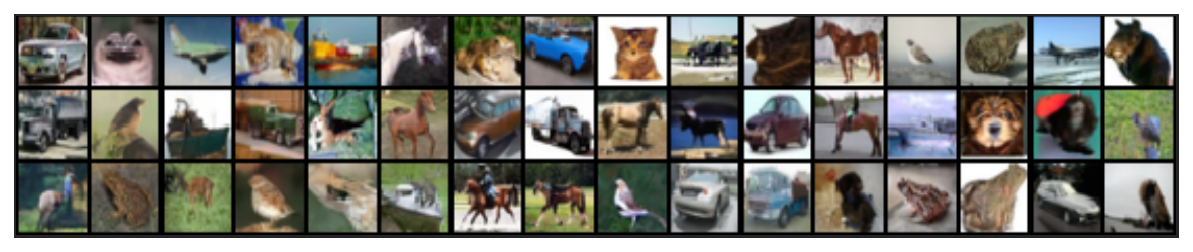}
    \end{subfigure}
    ~
    \begin{subfigure}{\textwidth}
    \centering
    \includegraphics[width=0.8\textwidth]{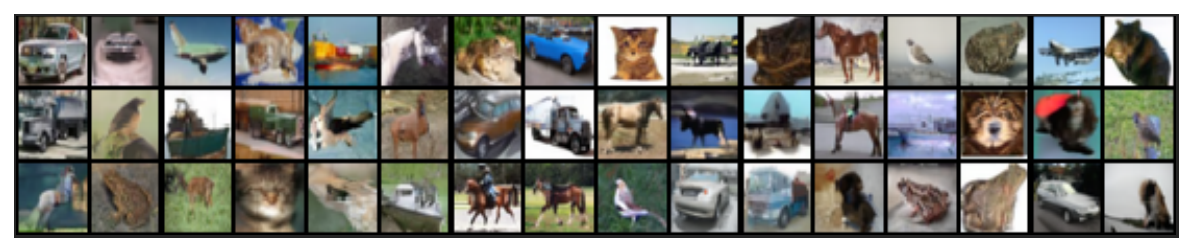}
    \end{subfigure}
    \caption{CIFAR10 samples from 1000 step DDPM, 1000 step DDIM and 100 step DDIM.}
    \label{fig:cifar10-samples}
\end{figure}

\begin{figure}
    \centering
    \begin{subfigure}{\textwidth}
    \centering
    \includegraphics[width=0.8\textwidth]{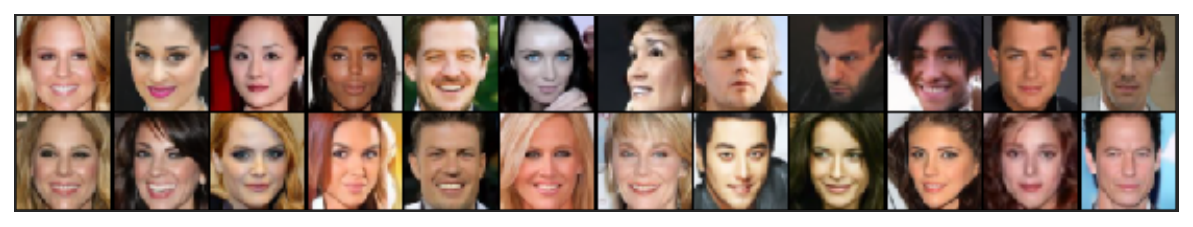}
    \end{subfigure}
    ~
    \begin{subfigure}{\textwidth}
    \centering
    \includegraphics[width=0.8\textwidth]{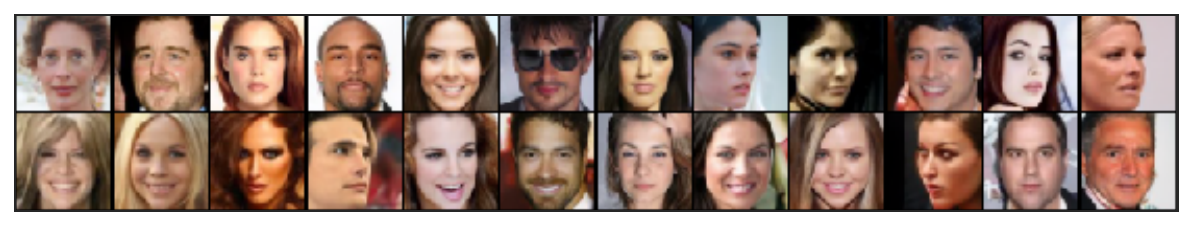}
    \end{subfigure}
    ~
    \begin{subfigure}{\textwidth}
    \centering
    \includegraphics[width=0.8\textwidth]{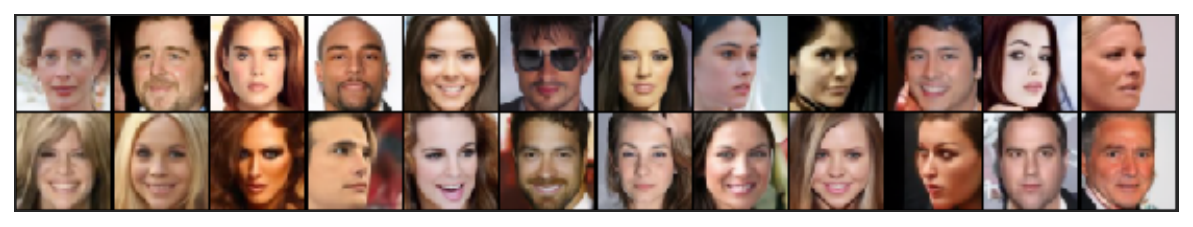}
    \end{subfigure}
    \caption{CelebA samples from 1000 step DDPM, 1000 step DDIM and 100 step DDIM.}
    \label{fig:celeba-samples}
\end{figure}

\begin{figure}
    \centering
    \includegraphics[width=\textwidth]{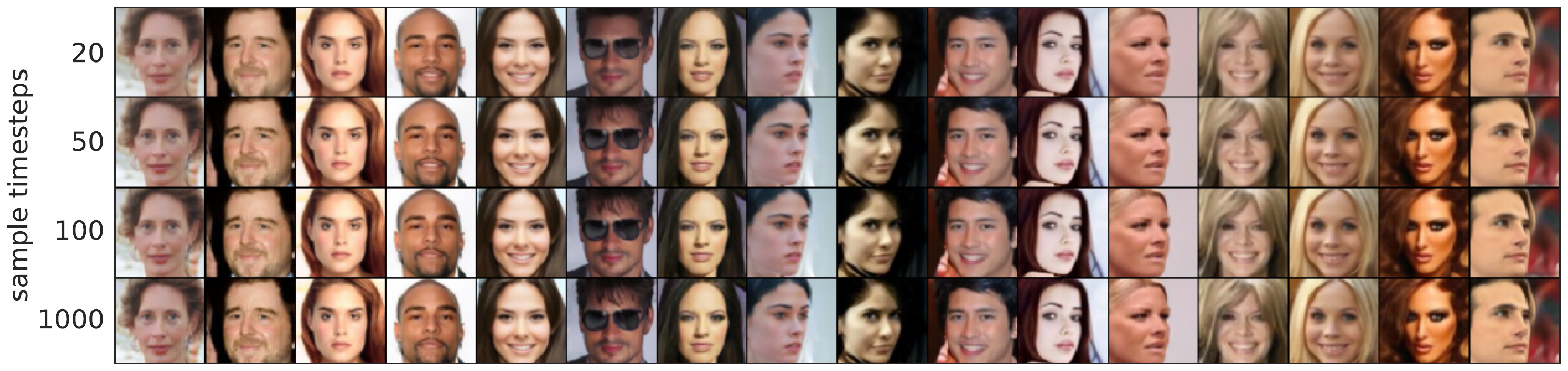}
    \caption{CelebA samples from DDIM with the same random $\vx_T$ and different number of steps.}
    \label{fig:celeba-consistency}
\end{figure}

\begin{figure}
    \centering
    \begin{subfigure}{\textwidth}
    \centering
    \includegraphics[width=0.8\textwidth]{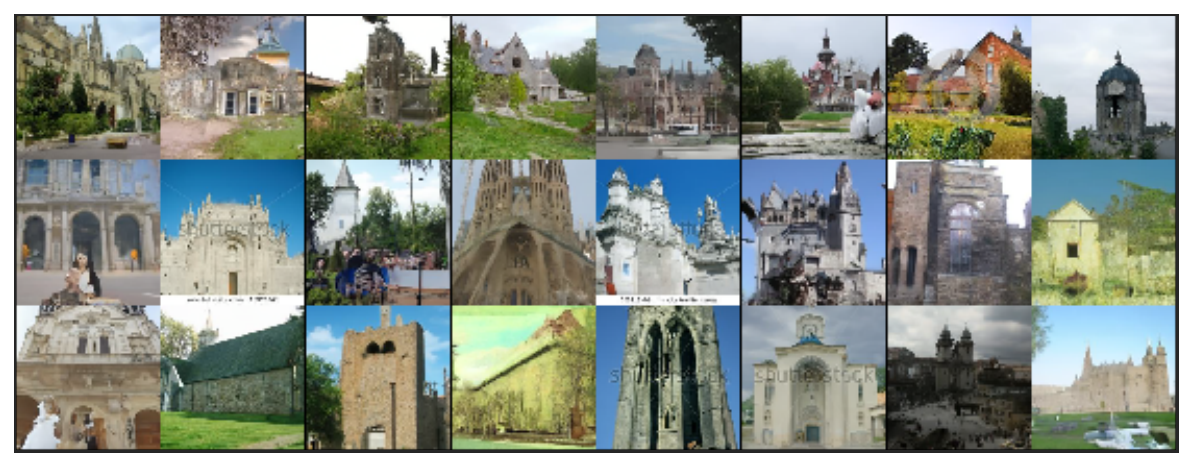}
    \end{subfigure}
    ~
    \begin{subfigure}{\textwidth}
    \centering
    \includegraphics[width=0.8\textwidth]{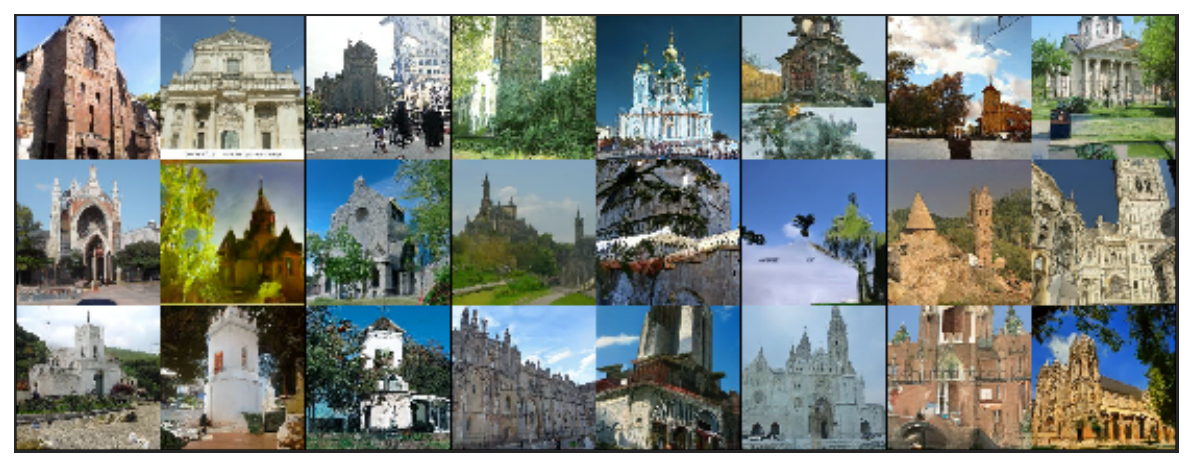}
    \end{subfigure}
    \caption{Church samples from 100 step DDPM and 100 step DDIM.}
    \label{fig:church-samples}
\end{figure}

\subsection{Interpolation}
\label{app:interpolation}
To generate interpolations on a line, we randomly sample two initial $\vx_T$ values from the standard Gaussian, interpolate them with spherical linear interpolation \citep{shoemake1985animating}, and then use the DDIM to obtain $\vx_0$ samples.
\begin{align}
    \vx_T^{(\alpha)} =  \frac{\sin((1 - \alpha) \theta)}{\sin(\theta)} \vx_{T}^{(0)} + \frac{\sin(\alpha \theta)}{\sin(\theta)} \vx_{T}^{(1)}
\end{align}
where $\theta = \arccos\left(\frac{(\vx_T^{(0)})^\top \vx_T^{(1)}}{\norm{\vx_T^{(0)}} \norm{\vx_T^{(1)}}}\right)$. These values are used to produce DDIM samples.

To generate interpolations on a grid, we sample four latent variables and separate them in to two pairs; then we use slerp with the pairs under the same $\alpha$, and use slerp over the interpolated samples across the pairs (under an independently chosen interpolation coefficient). We show more grid interpolation results in \Figref{fig:celeba-interp-grid} (CelebA), \Figref{fig:bedroom-interp-grid} (Bedroom), and \Figref{fig:church-interp-grid} (Church).

\begin{figure}[htbp]
    \centering
    \includegraphics[width=0.6\textwidth]{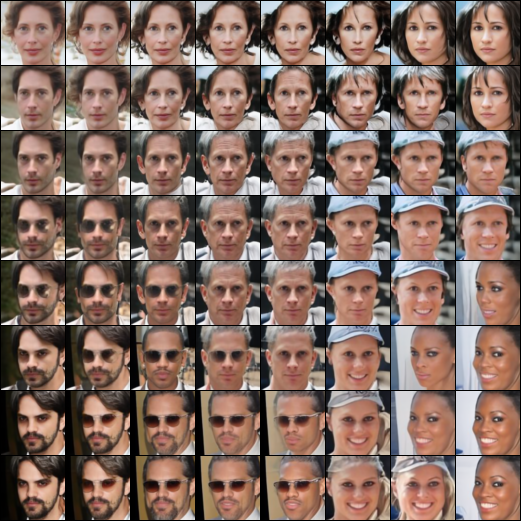}
    \caption{More interpolations from the CelebA DDIM with $\dim(\tau) = 50$.}
    \label{fig:celeba-interp-grid}
\end{figure}

\begin{figure}[htbp]
    \centering
    \includegraphics[width=0.6\textwidth]{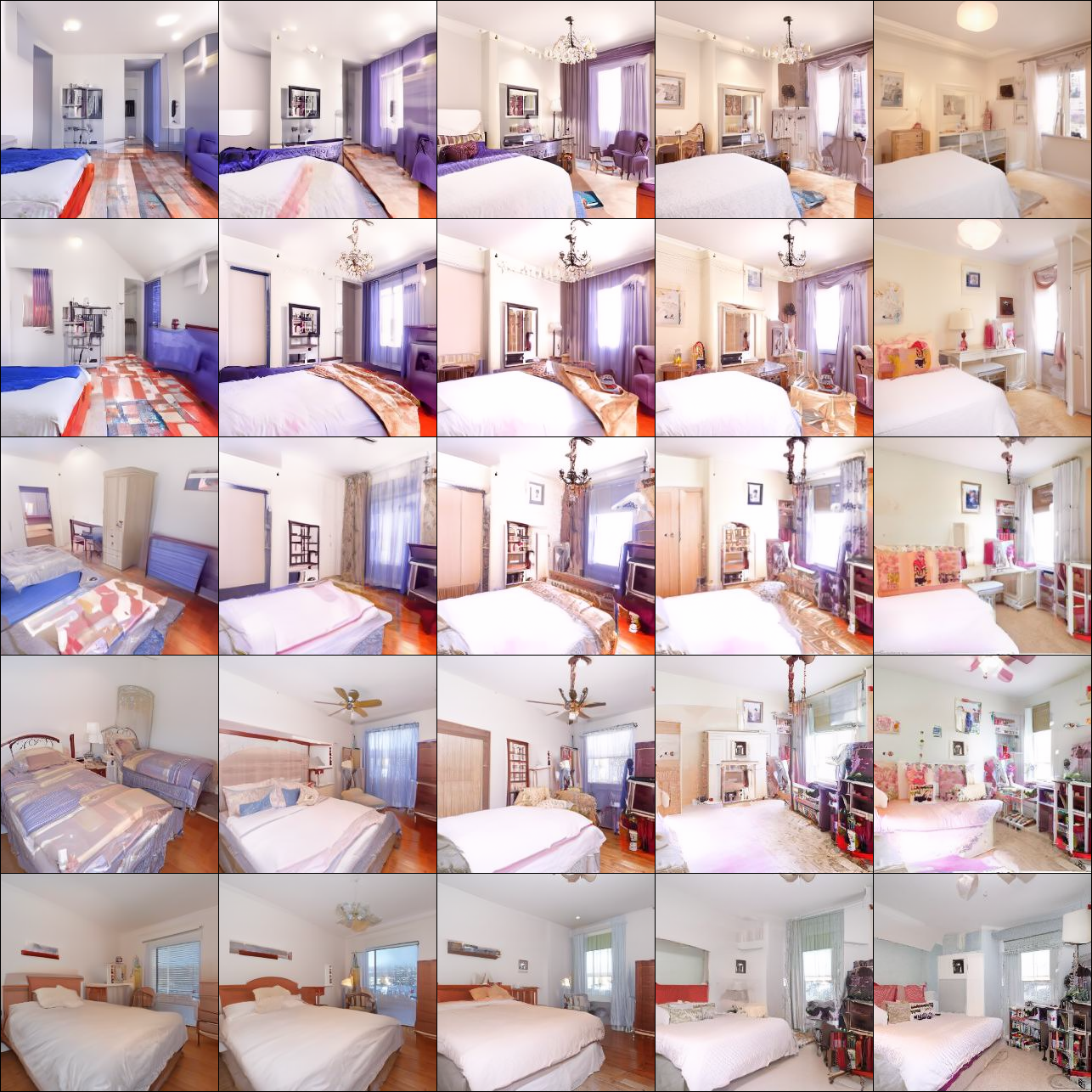}
    \caption{More interpolations from the Bedroom DDIM with $\dim(\tau) = 50$.}
    \label{fig:bedroom-interp-grid}
\end{figure}

\begin{figure}[htbp]
    \centering
    \includegraphics[width=0.6\textwidth]{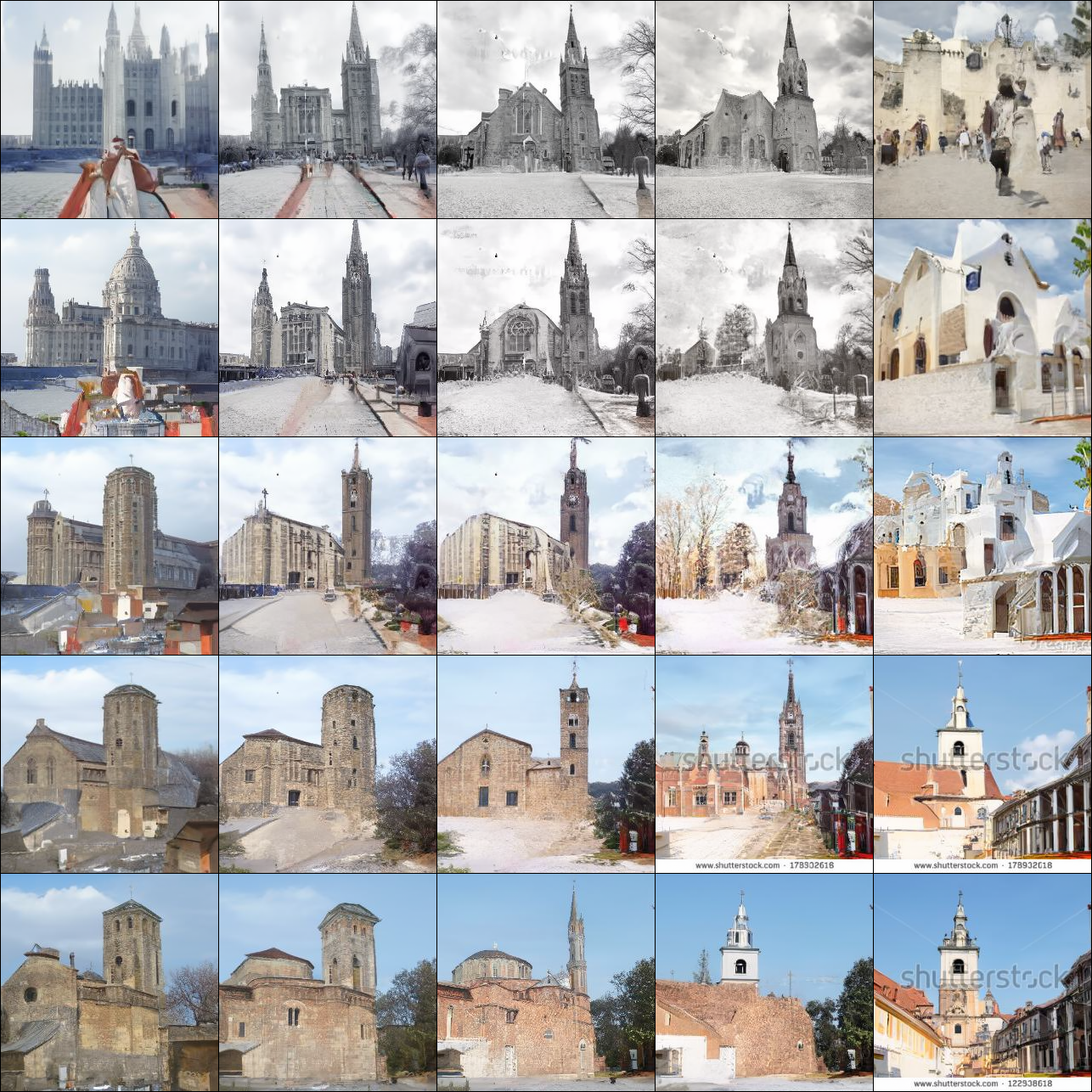}
    \caption{More interpolations from the Church DDIM with $\dim(\tau) = 50$.}
    \label{fig:church-interp-grid}
\end{figure}

\end{document}